\relax

\documentclass[11pt]{article}


\usepackage{fullpage,amssymb,amsmath,url}
\usepackage{makeidx,mdwtab}  
\usepackage{placeins}
\usepackage{graphicx,upgreek,morenotations,rotating}
\usepackage{subfigure}

\usepackage{diagbox}
\usepackage{booktabs}

\newcommand{\trot}{\textsc{trot}}
\newcommand{\sotrot}{\textsc{so}--\trot}
\newcommand{\kltrot}{\textsc{kl}--\trot}
\newcommand{\sm}{\textsc{sm}}

\newcommand{\ot}{\textsc{ot}}
\newcommand{\kl}{\textsc{kl}}
\newcommand{\vectorized}[1]{\bm{vec}(#1)}

\usepackage{algorithm}
\usepackage{algorithmic}
\usepackage{bbm}

\usepackage{color, colortbl}
\definecolor{Gray}{gray}{0.9}

\bibliographystyle{plain}
\author{
  Boris Muzellec\\
Ecole Polytechnique\\
  \texttt{boris.muzellec@polytechnique.edu}
\and
  Richard Nock\\
Data61, the Australian National University \& the University of Sydney\\
  \texttt{richard.nock@data61.csiro.au}
\and
  Giorgio Patrini\\
The Australian National University \& Data61\\
  \texttt{giorgio.patrini@anu.edu.au}
\and
  Frank Nielsen\\
Ecole Polytechnique \& Sony CS Labs, Inc.\\
  \texttt{Frank.Nielsen@acm.org}
}

\date{}
\begin{document}
\title{Tsallis Regularized Optimal Transport and Ecological Inference}

\maketitle



\begin{abstract}
Optimal transport is a powerful framework for computing distances between probability distributions. We unify the two main approaches to optimal transport, namely Monge-Kantorovitch and Sinkhorn-Cuturi, into what we define as Tsallis regularized optimal transport (\trot). \trot~interpolates a rich family of distortions from Wasserstein to Kullback-Leibler, encompassing as well Pearson, Neyman and Hellinger divergences, to name a few. We show that metric properties known for Sinkhorn-Cuturi generalize to \trot, and provide efficient algorithms for finding the optimal transportation plan with formal convergence proofs. We also present the first application of optimal transport to the problem of ecological inference, that is, the reconstruction of joint distributions from their marginals, a problem of large interest in the social sciences. \trot~provides a convenient framework for ecological inference by allowing to compute the joint distribution --- that is, the optimal transportation plan itself --- when side information is available, which is \textit{e.g.} typically what census represents in political science. Experiments on data from the 2012 US presidential elections display the potential of \trot~in delivering a faithful reconstruction of the joint distribution of ethnic groups and voter preferences.
\end{abstract}

\section{Introduction}

Optimal transport (\ot) allows to compare probability distributions by exploiting the underlying metric space on their supports \cite{kOT,mMS}. A number of prominent applications allow for a natural definition of this underlying metric space, from image processing \cite{rtgTE} to natural language processing \cite{kskwFW}, music processing \cite{ffceOS} and computer graphics \cite{sdpcbndgCW}. 

One key problem of \ot~is its processing complexity --- cubic in the support size, ignoring low order terms (on state of the art LP solvers \cite{cSD}). Moreover, the optimal transportation plan has often many zeroes, which is not desirable in some applications.
An important workaround was found and consists in penalizing the transport cost with a Shannon entropic regularizer \cite{cSD}. At the price of changing the transport distance, for a distortion with metric related properties, comes an algorithm with geometric convergence rates \cite{cSD,flOT}. As a result, we can picture two separate approches to \ot: one essentially relies on the initial Monge-Kantorovitch formulation optimizing the transportation cost itself \cite{vOT}, but is computationally expensive; the other is based on tweaking the transportation cost by Shannon regularizer \cite{cSD}. The corresponding optimization algorithm, grounded in a variety of different works \cite{cAGI,sDE,sTR}, is fast and can be very efficiently parallelized \cite{cSD}. 

Our paper brings \textit{three} contributions. (i) We interpolate these two worlds using a family of entropies celebrated in nonextensive statistical mechanics, Tsallis entropies \cite{tPG}, and hence we define the Tsallis regularized optimal transport (\trot). We show that the metric properties for Shannon entropy still hold in this more general case, and prove new properties that are key to our application. (ii) We provide efficient optimization algorithms to compute \trot~and the optimal transportation plan. (iii) Last but not least, we provide a new application of \trot~to a field in which this optimal transportation plan is the key unknown: the problem of ecological inference. 

\begin{figure}[t]
\begin{center}
\includegraphics[trim=160bp 180bp 200bp 120bp,clip,width=0.8\columnwidth]{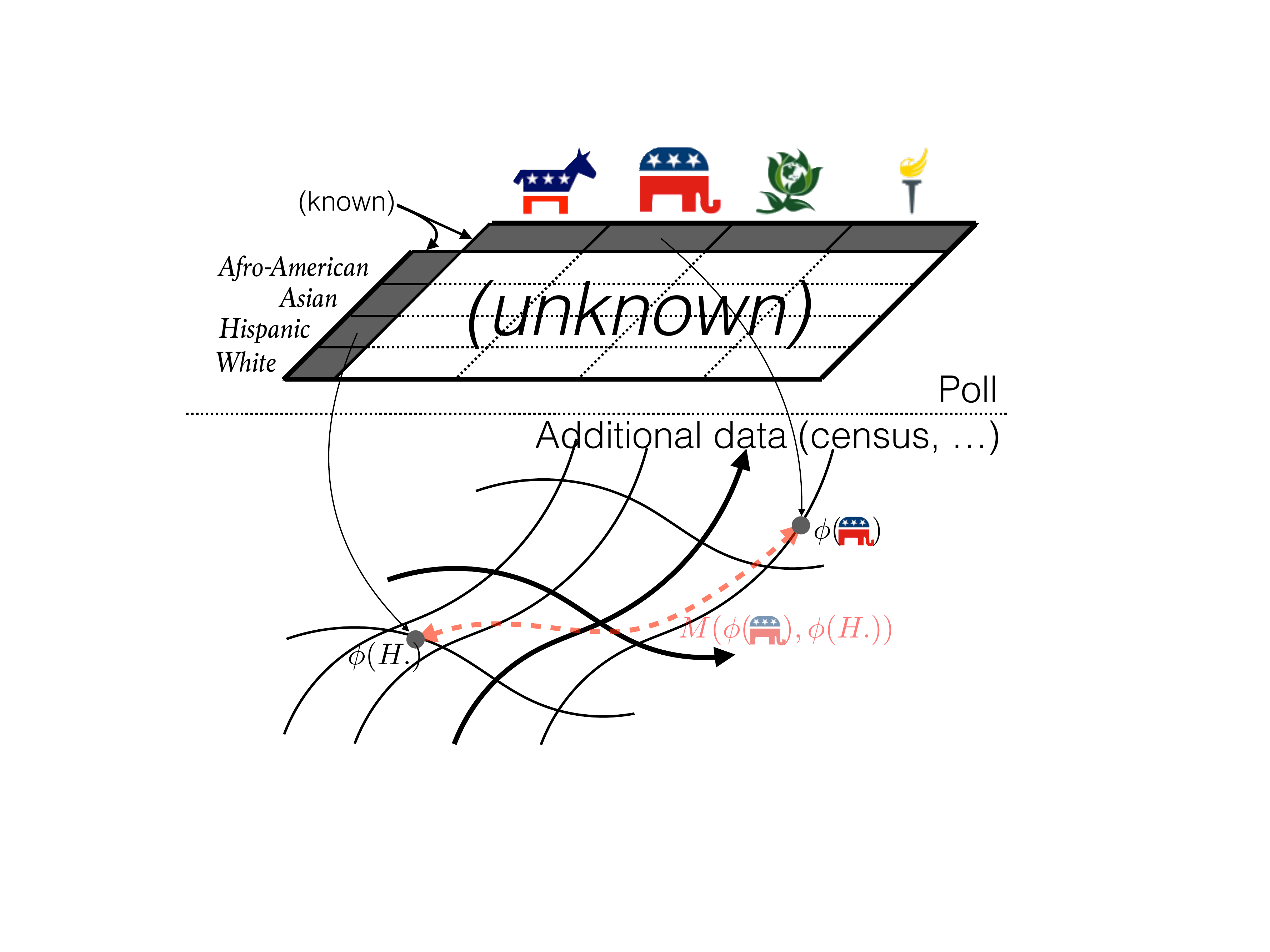}
\end{center}
\caption{Top: suppose we know (in grey) marginals for the US presidential election (topmost row) and ethnic breakdowns in the US population (leftmost column). Can we recover an estimated joint distribution (white cells) ? If side information is available such as individual level census data (bottom, as depicted on a Hilbert manifold with $\phi$-coordinates), then distances can be computed within the supports (dashed red), and optimal transport can provide an estimation of the joint distribution.}
\label{f-sch}
\end{figure}

Ecological inference deals with recovering information from aggregate data. It arises in a diversity of applied fields such as econometrics \cite{cmRS,tcmCL},  sociology and political science \cite{kAST,ktrEI} and epidemiology \cite{wsHE}, with a long history \cite{rEC}; interestingly, the empirical software engineering community has also explored the idea \cite{pfdEI}. Its iconic application is inferring electorate behaviour: given turnout results for several parties and proportions of some population strata, \textit{e.g.} percentages of ethnic groups, for many geographical regions such as counties, the aim is to recover contingency tables for parties $\times$ groups for all those counties. In the language of probability the problem is isomorphic to the following: given two random variables and their respective marginal distributions --- conditioned to another variable, the geography ---, compute their conditional joint distribution (See Figure \ref{f-sch}).

The problem is fundamentally under-determined and any solution can only either provide loose deterministic bounds \cite{ddAA,cmRS,tcmCL} or needs to enforce additional assumptions and prior knowledge on the data domain \cite{kAST}. More recently, the problem has witnessed a period of renaissance along with the publication of a diversity of methods from the second family, mostly inspired by distributional assumptions as summarised in \cite{ktrEI}. Closer to our approach, \cite{jmcAI} follows the road of a minimal subset of assumptions and frame the inference as an optimization problem. The method favors one solution according to some information-theoretic solution, \textit{e.g.} the Cressie-Read power divergence, intended as an entropic measure of the joint distribution.

There is an intriguing link between optimal transport and ecological inference: if we can figure out the computation of the ground metric, then the optimal transportation plan provides a solution to the ecological inference problem. This is appealing because it ties the computation of the joint distribution to a ground individual distance between people. Figure \ref{f-sch} gives an example. As recently advocated in ecological inference \cite{fwsWS}, it turns out that we have access to more and more side information that helps to solve ecological inference --- in our case, the computation of this ground metric. Polls, census, social networks are as many sources of public or private data that can be of help. It is not our objective to show how to best compute the ground metric, but we show an example on real world data for which a simple approach gives very convincing results.

To our knowledge, there is no former application of optimal transport (regularized or not) to ecological inference. The closest works either assume that the joint distribution follows a random distribution constrained to structural or marginal constraints \cite{fmTB} (and references therein) or modify the constraints to the marginals and / or add constraints to the problem \cite{dmvSO}. In all cases, there is no ground metric (or anything that looks like a cost) among supports that ties the computation of the joint distribution. More importantly, as noted in \cite{fwsWS}, traditional ecological inference would not use side information of the kind that would be useful to estimate our ground metric.

This paper is organized as follows. In Section $\S$ \ref{sec-def}, we present the main definitions for \ot. $\S$ \ref{sec-trot} presents \trot~and its geometric properties. $\S$ \ref{sec-soto} presents the algorithms to compute \trot~and the optimal transportation plan, and their properties. $\S$ \ref{sec-exp} details experiments. A last Section concludes with open problems. \textit{All proofs}, related comments, and some experiments are deferred to a Supplementary Material (\sm).

\section{Basic definitions and concepts}\label{sec-def}


In the following, we let $\bigtriangleup_n \defeq \lbrace \ve{x} \in \mathbb{R}_+^n : \ve{x}^\top\ve{1} = 1\rbrace$ denote the probability simplex (bold faces like $\ve{x}$ denote vectors). 
$\inner{P}{Q}\defeq \vectorized{P}^\top \vectorized{Q}$ denotes Frobenius product ($\vectorized{.}$ is the vectorization of a matrix). 
For any two $\ve{r},\ve{c}\in\bigtriangleup_n$, we define their \textit{transportation polytope} $U(\ve{r},\ve{c}) \defeq \lbrace P \in \mathbb{R}_+^{n\times n} : P\ve{1} = \ve{r}, P^\top\ve{1} = \ve{c}\rbrace$.
For any cost matrix $M\in \mathbb{R}^{n\times n}$, the \textit{transportation distance} between $\ve{r}$ and $\ve{c}$ as the solution of the following minimization problem:
\begin{eqnarray}\label{eq:OT}
d_M(\ve{r},\ve{c}) & \defeq & \min_{P \in U(\ve{r},\ve{c})} \inner{P}{M}\:\:.
\end{eqnarray}
Its argument, $P^\star\defeq \arg\min_{P \in U(\ve{r},\ve{c})} \inner{P}{M}$ 
is the \textit{(optimal) transportation plan} between $\ve{r}$ and $\ve{c}$. Assuming $M \neq 0$, $P^\star$ is unique. Furthermore, if $M$ is a \textit{metric matrix}, then $d_M$ is also a metric \cite[\S 6.1]{vOT}. 

In current applications of optimal transport, the key unknown is usually the distance $d_M$ \cite{cSD,cdFC,gcpbSO,qhchlNN,sdpcbndgCW} (etc). In the context of ecological inference \cite{jmcAI}, it is rather $P^\star$: $P^\star$ describes a joint distribution between two discrete random variables $\RR$ and $\CC$ with respective marginals $\ve{r}$ and $\ve{c}$, $p^\star_{ij} = \Pr(\RR = r_i\wedge \CC = c_j)$, for example the support of $\RR$ being the votes for year $Y$ US presidential election, and $\CC$ being the ethnic breakdown in the US population in year $Y$, see Figure \ref{f-sch}. In this case, $p^\star_{ij}$ denotes an "ideal" joint distribution of votes within ethnicities, ideal in the sense that it minimizes a distance based on the belief that votes \textit{correlate} positively with a similarity between an ethnic profile and a party's profile. While we will carry out most of our theory on formal transportation grounds, requiring in particular that $M$ be a distance matrix, it should be understood that requiring just "correlation" alleviates the need for $M$ to formally be a distance for ecological inference.

\section{Tsallis Regularized Optimal Transport}\label{sec-trot}
\begin{figure}[t]
\begin{center}
\includegraphics[trim=50bp 50bp 50bp 30bp,clip,width=0.6\columnwidth]{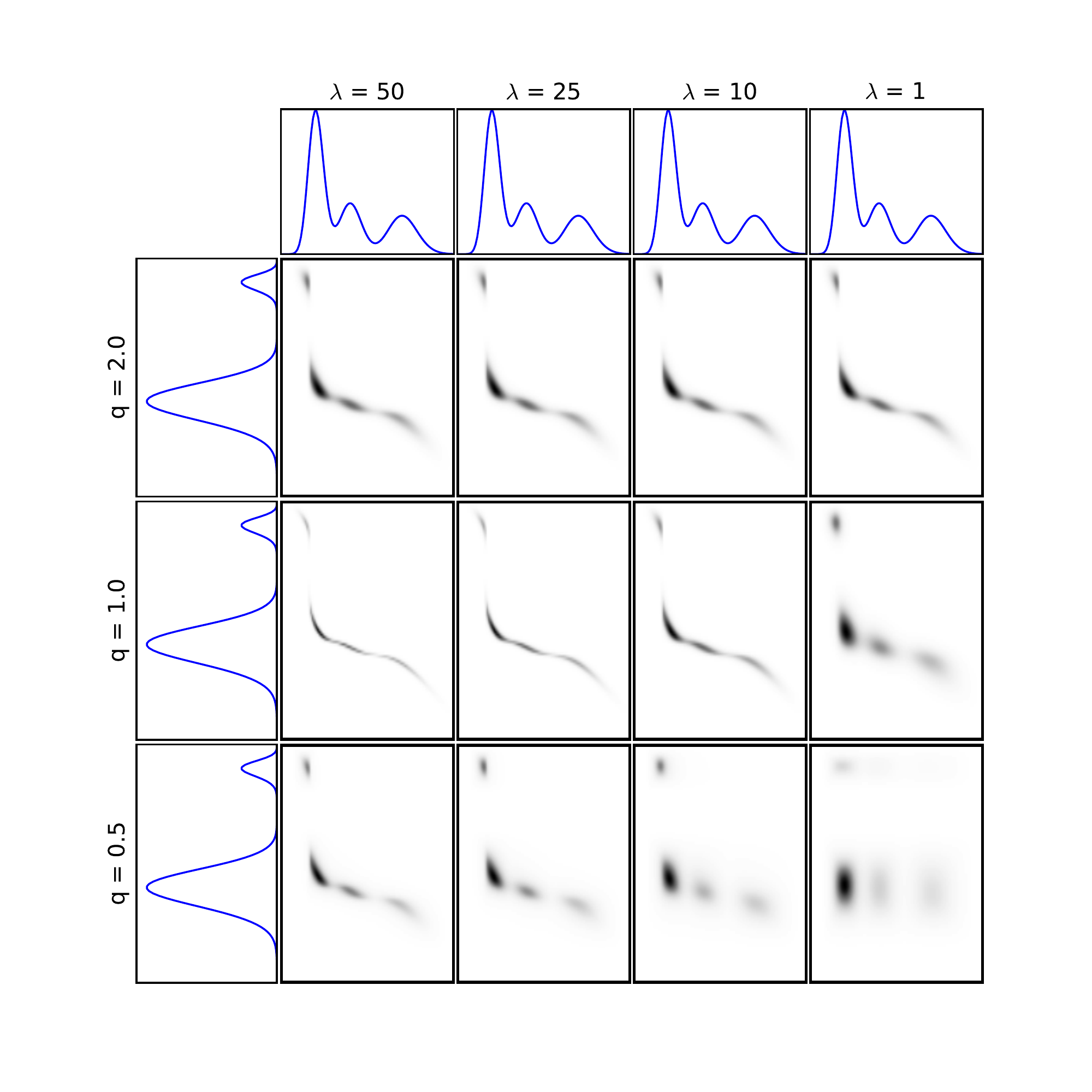}
\end{center}
\caption{Example of optimal \trot~transportation plans (grey levels) for two marginals (blue), with different values of $q$ (in $K_{1/q}$, \textit{Cf} Lemma \ref{lemd}) that corresponds to square Hellinger, Kullback-Leibler and Pearson's $\chi^2$ divergence (top to bottom, conventions follow \cite{sdpcbndgCW}).}
\label{f-exp1}
\end{figure}

For any $\ve{p}\in \mathbb{R}_+^{n}, q\in \mathbb{R}$, the \textit{Tsallis entropy} of $\ve{p}$, $H_q(\ve{p})$ is:
\begin{eqnarray}
H_q(\ve{p}) & \defeq & \frac{1}{1-q}\cdot\sum_i (p_i^q - p_i)\:\:,
\end{eqnarray}
and for any $P \in \mathbb{R}_+^{n\times n}$, we let $H_q(P) \defeq H_q(\vectorized{P})$. Notably, we have $\lim_{q\rightarrow 1} H_q(\ve{p}) = -\sum_i p_i \ln p_i \defeq H_1(\ve{p})$, which is just Shannon's entropy. For any $\lambda > 0$, we define the Tsallis Regularized Optimal Transport (\trot) distance.
\begin{definition}\label{defTROT}
The \trot($q,\lambda,M$)~distance (or \trot~distance for short) between $\ve{r}$ and $\ve{c}$ is:
\begin{eqnarray}
d^{\lambda,q}_M (\ve{r},\ve{c}) & \defeq & \min_{P \in U(\ve{r},\ve{c})} \inner{P}{M} - \frac{1}{\lambda}\cdot H_q(P)\:\:.\label{defMin}
\end{eqnarray}
\end{definition}
A simple yet important property is that \trot~distance unifies both usual modalities of optimal transport. It generalizes optimal transport (\ot) when $q\rightarrow 0$, since $H_q$ converges to a constant and so the \ot-distance is obtained up to a constant additive term \cite{kOT,mMS}. It also generalizes the regularized optimal transport approach of \cite{cSD} since $\lim_{q\rightarrow 1} d^{\lambda,q}_M (\ve{r},\ve{c}) =  d^{\lambda}_M (\ve{r},\ve{c})$, the Sinkhorn distance between $\ve{r}$ and $\ve{c}$ \cite{cSD}. There are several important structural properties of $d^{\lambda,q}_M$ that motivate the unification of both approaches. To state them, we respectively define the $q$-logarithm, 
\begin{eqnarray}
\log_q(x) & \defeq & (1-q)^{-1}\cdot(x^{1-q}-1)\:\:,
\end{eqnarray} 
the $q$-exponential, $\exp_q(x) \defeq (1+(1-q)\cdot x)^{1/(1-q)}$ and Tsallis relative $q$-entropy between $P,R\in \mathbb{R}_+^{n\times n}$ as:
\begin{eqnarray}
K_q(P , R) \hspace{-0.3cm}& \defeq & \hspace{-0.3cm}\frac{1}{1-q}\cdot\sum_{i,j} \left(qp_{ij} + (1-q)r_{ij}-p_{ij}^qr_{ij}^{1-q}\right) \:\:.
\end{eqnarray}
Taking joint distribution matrices $P,R$ and $q\rightarrow 1$ allows to recover the natural logarithm, the exponential and Kullback-Leibler (\kl) divergence, respectively \cite{aIG}. Other notable examples include (i) Pearson's $\chi^2$ statistic ($q=2$), (ii) Neyman's statistic ($q=-1$), (iii) square Hellinger distance ($q=1/2$) and the reverse \kl~divergence if scaled appropriately by $q$ \cite{jmcAI}, which also allows to span Amari's $\alpha$ divergences for $\alpha = 1 - 2q$ \cite{aIG}. For any function $f:{\mathbb{R}}\rightarrow {\mathbb{R}}$, denoting $f(P)$ for matrix $P$ as the matrix whose general term is $f(p_{ij})$.
\begin{lemma}\label{lemd}
Let $\tilde{U}\defeq \exp_q(-1)\exp_q^{-1}(\lambda M)$. Then:
\begin{eqnarray}
d^{\lambda,q}_M (\ve{r},\ve{c}) & = & \frac{1}{\lambda}\cdot \min_{P \in U(\ve{r},\ve{c})} K_{1/q}(P^q , \tilde{U}^q) + g(M)\:\:,\label{tsent}
\end{eqnarray}
where $g(M) \defeq (1/\lambda)\cdot \inner{\tilde{U}^q}{1}$ does not play any role in the minimization of $K_{1/q}(.\|.)$.
\end{lemma}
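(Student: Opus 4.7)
The strategy is a direct algebraic reduction: the plan is to show that on $U(\ve{r},\ve{c})$ the TROT integrand $\inner{P}{M} - (1/\lambda) H_q(P)$ equals $(1/\lambda) K_{1/q}(P^q,\tilde{U}^q)$ plus a quantity depending only on $M$, after which the claim follows by taking $\min_P$ on both sides.

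First I will expand $K_{1/q}(P^q,\tilde{U}^q)$ by substituting $q\mapsto 1/q$ in the definition of Tsallis relative entropy. The simplifications $(p_{ij}^q)^{1/q} = p_{ij}$ and $(\tilde{u}_{ij}^q)^{(q-1)/q} = \tilde{u}_{ij}^{q-1}$ collapse the general expression to
\begin{equation*}
K_{1/q}(P^q,\tilde{U}^q) \;=\; \frac{1}{q-1}\sum_{ij} p_{ij}^q \;+\; \sum_{ij}\tilde{u}_{ij}^q \;-\; \frac{q}{q-1}\sum_{ij} p_{ij}\,\tilde{u}_{ij}^{q-1}.
\end{equation*}
Next I will identify $\tilde{u}_{ij}^{q-1}$ explicitly using $\exp_q(x)^{1-q} = 1+(1-q)x$, which yields $\exp_q(-1)^{q-1} = 1/q$ and $\exp_q(\lambda m_{ij})^{q-1} = (1+(1-q)\lambda m_{ij})^{-1}$, whence $\tilde{u}_{ij}^{q-1} = (1+(1-q)\lambda m_{ij})/q$. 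This rearranges to the key identity $-\frac{q}{q-1}\tilde{u}_{ij}^{q-1} = \lambda m_{ij} - 1/(q-1)$, affine in $\lambda m_{ij}$.

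Plugging this identity into the cross-term of $K_{1/q}$ and invoking $\sum_{ij} p_{ij} = 1$ (from $P\in U(\ve{r},\ve{c})$) yields a contribution $\lambda\inner{P}{M} - 1/(q-1)$. On the entropy side, expanding $-H_q(P) = \frac{1}{q-1}\sum_{ij}p_{ij}^q - \frac{1}{q-1}\sum_{ij}p_{ij}$ and again using the marginal constraint shows that the $\sum p_{ij}^q$ terms cancel against the first summand of $K_{1/q}$, and the two constants $\pm 1/(q-1)$ cancel as well. All $P$-dependent quantities then match, leaving only the $P$-free residue $\sum_{ij}\tilde{u}_{ij}^q$, which is $\lambda g(M)$ up to a sign. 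Dividing by $\lambda$ and minimizing over $P$ completes the reduction.

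The main obstacle is purely bookkeeping: the substitution $q\leftrightarrow 1/q$ and the several $1-q$ versus $q-1$ sign flips require care, and the cancellation of $ij$-independent constants relies essentially on the transportation-polytope constraint $\sum_{ij} p_{ij}=1$. Conceptually, however, the lemma is a one-line observation: $\tilde{U} = \exp_q(-1)\exp_q^{-1}(\lambda M)$ is precisely rigged so that $-\frac{q}{q-1}\tilde{u}_{ij}^{q-1}$ is affine in $\lambda m_{ij}$, which is exactly what lets the cross-term of $K_{1/q}(P^q,\tilde{U}^q)$ reproduce the transport cost $\lambda\inner{P}{M}$; as a sanity check, the whole derivation specializes, at $q\to 1$, to the well-known Sinkhorn identity $\inner{P}{M}-(1/\lambda)H_1(P) = (1/\lambda)\mathrm{KL}(P \,\|\, e^{-1-\lambda M}) + \text{const}(M)$.
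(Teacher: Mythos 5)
Your computation is correct, and it is exactly the direct algebraic verification this lemma calls for; the paper in fact gives no separate proof of Lemma~\ref{lemd} in its supplement, and the intended argument is precisely your reduction via $\tilde{u}_{ij}^{q-1}=(1+(1-q)\lambda m_{ij})/q$ together with $\sum_{ij}p_{ij}=1$. Your parenthetical ``up to a sign'' is the right call: the identity actually reads $\inner{P}{M}-\frac{1}{\lambda}H_q(P)=\frac{1}{\lambda}K_{1/q}(P^q,\tilde{U}^q)-g(M)$, so the constant in eq.~(\ref{tsent}) should carry a minus sign, which is immaterial since, as the lemma itself notes, $g(M)$ plays no role in the minimization.
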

Lemma \ref{lemd} shows that the \trot~distance is a divergence involving \textit{escort} distributions \cite[$\S$ 4]{aIG}, a particularity that disappears in Sinkhorn distances since it becomes an ordinary \kl~divergence between distributions. 
Predictably, the generalization is useful to create new solutions to the regularized optimal transport problem that are not captured by Sinkhorn distances (\textit{solution} refers to (optimal) transportation plans, \textit{i.e.} the argument of the $\min$ in eq. (\ref{defMin})).
\begin{theorem}\label{thDist}
Let ${\mathcal{S}}_{\lambda,q}(\ve{r},\ve{c})$ denote the set of solutions of eq. (\ref{defMin}) when $M$ ranges over all distance matrices. Then $\forall q, q'$ such that $q\neq q'$, $\forall \lambda, \lambda'$, ${\mathcal{S}}_{\lambda,q}(\ve{r},\ve{c})\neq {\mathcal{S}}_{\lambda',q'}(\ve{r},\ve{c})$.
\end{theorem}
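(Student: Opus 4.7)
The plan is to convert $P\in\mathcal{S}_{\lambda,q}(\ve{r},\ve{c})$ into an algebraic constraint on $P$ alone, indexed only by $q$, and then exhibit a plan satisfying it for $q$ but not for $q'$.

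From the KKT conditions for the strictly convex program (\ref{defMin}), any $P$ with positive entries in the interior of $U(\ve{r},\ve{c})$ is \trot-optimal for some distance matrix $M$ iff there exist Lagrange multipliers $\ve{u},\ve{v}\in\mathbb{R}^n$ such that
\begin{eqnarray*}
M_{ij} & = & u_i+v_j+G_q(p_{ij})\:\:,\qquad G_q(x)\defeq \frac{qx^{q-1}-1}{\lambda(1-q)}\:\:,
\end{eqnarray*}
and this matrix $M$ satisfies the distance axioms. Setting $w_i\defeq u_i-v_i$, the symmetry $M_{ij}=M_{ji}$ becomes the discrete gradient equation $w_i-w_j=G_q(p_{ji})-G_q(p_{ij})$, which admits a solution $\ve{w}$ iff every 3-cycle sum of its right-hand side vanishes. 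Since the additive and multiplicative constants in $G_q$ cancel out of any cyclic sum, this reduces to the $q$-\emph{cocycle identity}
\begin{eqnarray*}
p_{ij}^{q-1}+p_{jk}^{q-1}+p_{ki}^{q-1} & = & p_{ji}^{q-1}+p_{kj}^{q-1}+p_{ik}^{q-1}
\end{eqnarray*}
on every triple $(i,j,k)$. Crucially this identity depends on $q$ but not on $\lambda$, so every element of $\mathcal{S}_{\lambda,q}(\ve{r},\ve{c})$ lies on a $q$-indexed variety $V_q\subset U(\ve{r},\ve{c})$ that does not depend on $\lambda$.

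For $q\neq q'$ the functions $x\mapsto x^{q-1}$ and $x\mapsto x^{q'-1}$ are linearly independent on $(0,\infty)$, so $V_q\neq V_{q'}$. To conclude, I would fix $n\geq 3$ and $\ve{r},\ve{c}$ of full support, pick a reference distance matrix $M_0$ with strictly positive off-diagonal entries and strict triangle inequalities, and let $P_0$ be the \trot-optimal plan at $(q,\lambda,M_0)$. A sufficiently small perturbation of $P_0$ inside $U(\ve{r},\ve{c})$ along a direction tangent to $V_q$ but transverse to $V_{q'}$ at $P_0$ produces $P\in V_q\setminus V_{q'}$. The reconstructed $M$ (determined from $P$ by the symmetry and zero-diagonal conditions, up to a harmless global constant) remains close to $M_0$, so the distance axioms (being open conditions strictly satisfied at $M_0$) still hold and $P\in\mathcal{S}_{\lambda,q}(\ve{r},\ve{c})$. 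Membership of $P$ in $\mathcal{S}_{\lambda',q'}(\ve{r},\ve{c})$ would by the same KKT argument force $P\in V_{q'}$, a contradiction; hence $\mathcal{S}_{\lambda,q}(\ve{r},\ve{c})\neq\mathcal{S}_{\lambda',q'}(\ve{r},\ve{c})$.

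The main technical obstacle is the existence of a tangent direction witnessing transversality of $V_q$ and $V_{q'}$ at $P_0$. The gradient of the $q$-cocycle on the triple $(i,j,k)$ has entries proportional to $\pm p_{ij}^{q-2}$ (analogously for $q'$), so collinearity with the $q'$-gradient would force all six quantities $p_{ij}^{q-q'}$ along the triple to coincide, which fails as soon as the six $p_{ij}$'s are not all equal --- a genericity condition easily arranged by perturbing $M_0$. Everything else (the KKT derivation, the reconstruction of $M$ from $P$, and the openness of the distance axioms) is routine.
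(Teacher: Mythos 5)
Your route is genuinely different from the paper's. The paper fixes a single distance matrix $M$, normalizes so the $(q,\lambda)$-solution reads $p_{ij}=\exp_q(-1)\exp_q^{-1}(m_{ij})$, and shows that coincidence with the $(q',\lambda')$-solution forces the functional equation $f_{q',q}^{\lambda'}(m_{ij})=\alpha_i+\beta_j$; monotonicity of $f_{q',q}^{\lambda'}$ on a rescaled range, together with $m_{ii}=0$ and symmetry, then forces $M\equiv 0$, a contradiction. You instead characterize the whole set $\mathcal{S}_{\lambda,q}$ through the KKT stationarity $m_{ij}=u_i+v_j+G_q(p_{ij})$ and the $\lambda$-free cocycle variety $V_q$, which has the merit of correctly quantifying over \emph{all} distance matrices on the $(q',\lambda')$ side (the paper's argument, read literally, only compares the two problems on the same $M$). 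Your reduction to the $q$-cocycle identity is correct, as is the reconstruction of $M$ from $P$ and the openness argument for the strict triangle inequality and positivity.

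The gap is in the transversality step, which you yourself flag as the crux. To move inside $V_q\cap U(\ve{r},\ve{c})$ while leaving $V_{q'}$, you need a direction $D$ with $D\ve{1}=0$, $D^\top\ve{1}=0$, orthogonal to \emph{every} $q$-cocycle gradient, yet not orthogonal to some $q'$-cocycle gradient; equivalently, $\nabla F_{q'}^{(i,j,k)}(P_0)$ must lie outside the span of the $2n-1$ marginal-constraint gradients together with all the $q$-cocycle gradients. What you verify is only that $\nabla F_{q'}^{(i,j,k)}$ and $\nabla F_{q}^{(i,j,k)}$ are not collinear in the ambient space; their projections onto the $(n-1)^2$-dimensional tangent space of $U(\ve{r},\ve{c})$ could still be collinear (or, for $n>3$, the $q'$-gradient could fall into the larger span), and nothing you wrote rules this out. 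You would need either an explicit computation for $n=3$ at a concrete $P_0$ arising from a genuine distance matrix, or a genericity argument over $M_0$ that tracks how the solution map $M_0\mapsto P_0$ moves $P_0$ --- "easily arranged by perturbing $M_0$" asserts exactly the point at issue. A secondary (fixable) imprecision: a first-order perturbation along a tangent direction does not stay on $V_q$, so the reconstructed $M$ would fail to be symmetric; you must move along the variety itself, which requires checking that the cocycle constraints together with the marginal constraints form a submersion at $P_0$ --- the same independence statement as above. Until that linear-independence claim is established, the witness $P\in V_q\setminus V_{q'}$ has not been produced and the proof is incomplete.
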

Figure \ref{f-exp1} provides examples of solutions. Adding the free parameter $q$ is not just interesting for the reason that we bring new solutions to the table: $(1/q)\cdot K_q(\ve{p},\ve{r})$ turns out to be Cressie-Read Power Divergence (for $q = \lambda+1$, \cite{jmcAI}), and so \trot~has an applicability in ecological inference that Sinkhorn distances alone do not have. In addition, we also generalize two key facts already known for Sinkhorn distances \cite{cSD}. First, the solution to \trot~is unique (for $q\neq 0$) and satisfies a simple analytical expression amenable to convenient optimization.
\begin{theorem}\label{thUnique}
There exists exactly one matrix $P \in U(\ve{r},\ve{c})$ solution to \trot($q,\lambda,M$). It satisfies:
\begin{eqnarray}\label{eq:KKT}
p_{ij} & = & \exp_q(-1)\exp_q^{-1}(\alpha_i + \lambda m_{ij} + \beta_j)\:\:,\forall i,j\:\:.
\end{eqnarray} 
($\ve{\alpha},\ve{\beta} \in \mathbb{R}^n$ are unique up to an additive constant).
\end{theorem}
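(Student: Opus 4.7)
The plan is to reduce Theorem \ref{thUnique} to a convex-optimization argument followed by a direct KKT computation, generalizing the Shannon ($q=1$) derivation of \cite{cSD}. First, I would show that the objective $F(P) \defeq \inner{P}{M} - (1/\lambda)\cdot H_q(P)$ is strictly convex on the transportation polytope $U(\ve{r},\ve{c})$. The linear part $\inner{P}{M}$ contributes nothing to the Hessian, while each summand $(p_{ij}^q - p_{ij})/(1-q)$ of $-H_q$ has second derivative $q\cdot p_{ij}^{q-2}$ in its own variable, which is strictly positive on $\mathbb{R}_+^*$ for any $q > 0$. Strict convexity of $F$ is thus immediate, and since $U(\ve{r},\ve{c})$ is a non-empty compact convex polytope (non-emptiness via $\ve{r}\ve{c}^\top$, boundedness via $p_{ij}\le \min(r_i,c_j)$) and $F$ is continuous on it, the Weierstrass theorem combined with strict convexity yields existence and uniqueness of the minimizer.

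To recover the announced form, I would apply Lagrangian duality to the two linear equality constraints $P\ve{1}=\ve{r}$, $P^\top\ve{1}=\ve{c}$ with multipliers $\ve{\alpha},\ve{\beta}\in\mathbb{R}^n$. For $0 < q < 1$ the positivity constraints are automatically inactive because $\partial(-H_q/\lambda)/\partial p_{ij}$ diverges to $+\infty$ as $p_{ij}\to 0^+$, forcing the optimum into the relative interior. Stationarity gives $m_{ij} - (qp_{ij}^{q-1}-1)/(\lambda(1-q)) = \alpha_i + \beta_j$, which solves to $p_{ij} = q^{1/(1-q)}\cdot(1+(1-q)(\tilde\alpha_i + \lambda m_{ij} + \tilde\beta_j))^{1/(q-1)}$ after the affine rescaling $\tilde\alpha_i = -\lambda\alpha_i$, $\tilde\beta_j = -\lambda\beta_j$. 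Recognizing $q^{1/(1-q)} = \exp_q(-1)$ and $(1+(1-q)y)^{1/(q-1)} = \exp_q^{-1}(y)$ then yields exactly eq.~(\ref{eq:KKT}).

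Uniqueness of $(\ve{\alpha},\ve{\beta})$ up to an additive constant is a direct observation from the form of the solution: $p_{ij}$ depends on the multipliers only through the sums $\tilde\alpha_i + \tilde\beta_j$, so the shift $\tilde\alpha_i\mapsto\tilde\alpha_i + c$, $\tilde\beta_j\mapsto \tilde\beta_j - c$ leaves the plan unchanged; conversely, two pairs yielding the same matrix must agree on every pairwise sum, which rigidly determines them up to such a shift. The step I expect to be most delicate is the regime $q > 1$, where $-H_q$ stays strictly convex but its gradient remains bounded as $p_{ij}\to 0^+$, so boundary minimizers are a priori possible; I would handle this via a separate complementary-slackness analysis on the nonnegativity constraints, using the convention $\exp_q^{-1}(+\infty) = 0$ to absorb any vanishing entries into the same functional form.
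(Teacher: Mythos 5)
Your proof follows essentially the same route as the paper's: strict convexity of the separable objective (each summand has second derivative $\frac{q}{\lambda}p_{ij}^{q-2}>0$) on the non-empty compact polytope $U(\ve{r},\ve{c})$ gives existence and uniqueness, KKT stationarity gives eq.~(\ref{eq:KKT}), and the observation that $p_{ij}$ depends on the multipliers only through the sums $\alpha_i+\beta_j$ gives uniqueness up to an additive shift. Two small notes: for $q\in(0,1)$ the relevant blow-up is $\partial(-H_q/\lambda)/\partial p_{ij}\to-\infty$ (not $+\infty$) as $p_{ij}\to 0^+$, which is what rules out boundary minimizers; and your flagged concern about active nonnegativity constraints when $q>1$ is legitimate --- the paper simply asserts that KKT yields the stated form without addressing it --- so your extra complementary-slackness care is, if anything, more complete than the published argument.
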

Second, we can tweak \trot~to meet distance axioms. Let
\begin{eqnarray}
d_{M, \alpha, q}(\ve{r},\ve{c}) & \defeq & \min_{\substack{P \in U(\ve{r},\ve{c})\\ H_q(P) - H_q(\ve{r}) - H_q(\ve{c}) \geq \alpha}} \inner{P}{M}\:\:,\label{trot22}
\end{eqnarray}
where $\alpha \geq 0$. For any $M, \ve{r}, \ve{c}, \lambda\geq 0$, $\exists \alpha\geq 0$ such that $d_{M,\alpha, q}(\ve{r},\ve{c}) = d^{\lambda,q}_M (\ve{r},\ve{c})$. Also, the following holds.
\begin{theorem}\label{thAxiom}
For $q \geq 1,\alpha \geq 0$ and if $M$ is a metric matrix, function $(\ve{r}, \ve{c}) \rightarrow \mathbbm{1}_{\lbrace \ve{r}=\ve{c}\rbrace} d_{M, \alpha, q}(\ve{r},\ve{c})$ is a distance.
\end{theorem}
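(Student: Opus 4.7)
The plan is to verify the three distance axioms for
$D(\ve{r},\ve{c}) := \mathbbm{1}_{\{\ve{r}\neq\ve{c}\}}\,d_{M,\alpha,q}(\ve{r},\ve{c})$
(reading the indicator in the statement as ``$\neq$'', since with ``$=$'' the map would be identically zero off the diagonal and could not be a distance). Symmetry and separation are elementary; the triangle inequality follows the script of Cuturi's Sinkhorn-metric proof via the gluing lemma, with the real work lying in adapting the entropic half of that argument to Tsallis entropy.

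\textbf{Symmetry and separation.} Transposition $P \mapsto P^{\top}$ is a bijection $U(\ve{r},\ve{c}) \to U(\ve{c},\ve{r})$. Since $M = M^{\top}$ (as a metric matrix) we have $\inner{P^{\top}}{M} = \inner{P}{M}$, and $H_q(P) = H_q(P^{\top})$ because $H_q$ depends only on the multiset of entries. Hence both the objective and the constraint in (\ref{trot22}) are invariant under $P\mapsto P^{\top}$, giving $d_{M,\alpha,q}(\ve{r},\ve{c}) = d_{M,\alpha,q}(\ve{c},\ve{r})$. For separation, $D(\ve{r},\ve{r}) = 0$ by construction; and when $\ve{r} \neq \ve{c}$, every feasible $P$ must carry positive mass off the diagonal (otherwise $P$ would be diagonal, forcing $\ve{r} = \ve{c}$), so $\inner{P}{M} > 0$ because $m_{ii}=0$ and $m_{ij}>0$ for $i\neq j$.

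\textbf{Triangle inequality.} For $\ve{r},\ve{c},\ve{t}$ pairwise distinct (if any two coincide the claim is trivial from the previous step), let $P^{\star}$ and $Q^{\star}$ be the minimizers in (\ref{trot22}) at $(\ve{r},\ve{c})$ and $(\ve{c},\ve{t})$. Apply Villani's gluing lemma to define $S := P^{\star} D_{\ve{c}}^{-1} Q^{\star} \in U(\ve{r},\ve{t})$, with $D_{\ve{c}} := \mathrm{diag}(\ve{c})$. Using the triangle inequality on $M$,
\[
\inner{S}{M} = \sum_{i,j,k}\frac{P^{\star}_{ij}\,Q^{\star}_{jk}}{c_j}\,m_{ik} \;\leq\; \sum_{i,j,k}\frac{P^{\star}_{ij}\,Q^{\star}_{jk}}{c_j}\,(m_{ij}+m_{jk}) \;=\; \inner{P^{\star}}{M} + \inner{Q^{\star}}{M}.
\]
Hence, once we verify that $S$ is feasible for the $(\ve{r},\ve{t})$ problem, i.e.\ $H_q(S) - H_q(\ve{r}) - H_q(\ve{t}) \geq \alpha$, we conclude $d_{M,\alpha,q}(\ve{r},\ve{t}) \leq \inner{S}{M} \leq d_{M,\alpha,q}(\ve{r},\ve{c}) + d_{M,\alpha,q}(\ve{c},\ve{t})$.

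\textbf{Entropic feasibility of the gluing (main obstacle).} The Shannon case is immediate from subadditivity and the chain rule $H(P) = H(\ve{c}) + H(X\mid Y)$; for Tsallis entropy the analogue is only \emph{pseudo}-additive under independence, $H_q(\ve{u}\ve{v}^{\top}) = H_q(\ve{u}) + H_q(\ve{v}) + (1-q) H_q(\ve{u}) H_q(\ve{v})$, and it is this $(1-q)$ correction that the proof must tame. The plan is to combine: (i) concavity of $H_q$ on the simplex for $q \geq 1$, applied via Jensen to the decomposition $S = \sum_j c_j\, (\tilde P_{\cdot j}\,\tilde Q_{j\cdot}^{\top})$ with conditionals $\tilde P_{ij} := P^{\star}_{ij}/c_j$, $\tilde Q_{jk} := Q^{\star}_{jk}/c_j$, to get $H_q(S) \geq \sum_j c_j\, H_q(\tilde P_{\cdot j}\,\tilde Q_{j\cdot}^{\top})$; (ii) the Tsallis chain-rule identity $H_q(P^{\star}) = H_q(\ve{c}) + \sum_j c_j^{q}\,H_q(\tilde P_{\cdot j})$, together with $c_j^q \leq c_j$ and $H_q \geq 0$ (both for $q \geq 1$) to bound $\sum_j c_j H_q(\tilde P_{\cdot j}) \geq H_q(P^{\star}) - H_q(\ve{c})$ and symmetrically for $Q^{\star}$; and (iii) the feasibility hypotheses $H_q(P^{\star}) \geq H_q(\ve{r})+H_q(\ve{c})+\alpha$, $H_q(Q^{\star}) \geq H_q(\ve{c})+H_q(\ve{t})+\alpha$, with $H_q(\ve{c}) \geq 0$. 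The expected bottleneck is that for $q \geq 1$ the $(1-q)$ correction in (i) contributes with the \emph{wrong} sign (negative) to our lower bound, and must be absorbed by the slack in the two entropy constraints; making this cancellation go through uniformly in $\alpha\geq 0$ is the delicate point, and all the $q\geq 1$ hypotheses in the theorem (concavity of $H_q$, $c_j^q \leq c_j$, $H_q \geq 0$, sign of $1-q$) are recruited precisely to arrange it.
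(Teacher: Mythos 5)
Your treatment of symmetry, separation, and the cost half of the triangle inequality (gluing $S := P^{\star}D_{\ve{c}}^{-1}Q^{\star}$ and using the metric property of $M$) matches the paper and is fine. The gap is exactly where you flag it: the entropic feasibility of $S$ is the entire content of the theorem for $q>1$, and the route you sketch does not close. Following your steps (i)--(iii), concavity plus pseudo-additivity gives
\begin{eqnarray*}
H_q(S) & \geq & \sum_j c_j\bigl(H_q(\tilde P_{\cdot j}) + H_q(\tilde Q_{j\cdot})\bigr) - (q-1)\sum_j c_j\, H_q(\tilde P_{\cdot j})\,H_q(\tilde Q_{j\cdot})\:\:,
\end{eqnarray*}
and after invoking the chain rule and both feasibility constraints you arrive at $H_q(S) \geq H_q(\ve{r}) + H_q(\ve{t}) + 2\alpha - (q-1)\sum_j c_j H_q(\tilde P_{\cdot j})H_q(\tilde Q_{j\cdot})$. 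To conclude $S\in U_\alpha(\ve{r},\ve{t})$ you would need the nonnegative cross term to be at most $\alpha$, which fails uniformly in $\alpha\geq 0$: already at $\alpha=0$ with $q>1$ and non-degenerate conditionals the bound is strictly worse than what is required. There is no slack to absorb it, so the "delicate point" you defer is not a technicality but a dead end of this particular decomposition.

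The paper avoids the pseudo-additivity cross term altogether by working with Furuichi's Tsallis conditional and mutual entropies. Viewing $(\X,\Y,\Z)$ as jointly distributed through the gluing (so that $\X\to\Y\to\Z$ is a Markov chain), one has $I_q(\X;\Z\mid\Y)=0$, and the chain rule for Tsallis mutual entropy together with $I_q(\X;\Y\mid\Z)\geq 0$ for $q\geq 1$ yields the data-processing inequality $I_q(\X;\Y)\geq I_q(\X;\Z)$, i.e.
\begin{eqnarray*}
H_q(S) - H_q(\ve{r}) - H_q(\ve{t}) & \geq & H_q(P^\star) - H_q(\ve{r}) - H_q(\ve{c}) \;\geq\; \alpha\:\:.
\end{eqnarray*}
Note that this uses only \emph{one} of the two entropy constraints, whereas your symmetric combination of both is precisely what manufactures the uncancellable $(q-1)$ term. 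To repair your proof you should replace steps (i)--(iii) by this monotonicity-of-Tsallis-mutual-information argument (or prove an equivalent one-sided inequality directly).
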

Theorem \ref{thAxiom} is a generalization of \cite[Theorem 1]{cSD} (for $q=1$). As we explain more precisely in \sm~(Section \ref{proof_thAxiom}), there is a downside to using $d_{M, \alpha, q}$ as proof of the good properties of $d^{\lambda,q}_M$: the triangle inequality, key to Euclidean geometry, transfers to $d^{\lambda,q}_M$ \textit{with varying and uncontrolled parameters} --- in the inequality, the three values of $\lambda$ may all be different! This does not break down the good properties of $d^{\lambda,q}_M$, it just calls for workarounds. We now give one, which replaces $d_{M, \alpha, q}$ by the quantity ($\beta \in \mathbb{R}$ is a constant):
\begin{eqnarray}
d^{\lambda,q,\beta}_M (\ve{r},\ve{c}) & \defeq &  d^{\lambda,q}_M (\ve{r},\ve{c}) + \frac{\beta}{\lambda}\cdot \left(H_q(\ve{r}) + H_q(\ve{c})\right)\:\:.\label{trot3}
\end{eqnarray}
This has another trivial advantage that $d_{M, \alpha, q}$ does not have: the solutions (optimal transportation plans) are always the \textit{same} on both sides. Also, the right-hand side is lowerbounded for any $\ve{r}, \ve{c}$ and the trick that ensures the identity of the indiscernibles still works on $d^{\lambda,q,\beta}_M$. The good news is that if $q=1$, $d^{\lambda,q,\beta}_M$, \textit{as is}, can satisfy the triangle inequality.
\begin{theorem}\label{thdist3}
$d^{\lambda,1,\beta}_M$ satisfies the triangle inequality, $\forall \beta \geq 1$.
\end{theorem}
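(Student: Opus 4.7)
The plan is to follow the Sinkhorn-distance argument of \cite[Thm.~1]{cSD}, carefully tracking a residual $H_1(\ve{m})$ term that appears only because $q=1$ preserves the additivity of Shannon entropy. Fix $\ve{r},\ve{m},\ve{c}\in\bigtriangleup_n$, let $P^{\star}\in U(\ve{r},\ve{m})$ and $Q^{\star}\in U(\ve{m},\ve{c})$ be optimal for $d^{\lambda,1}_M(\ve{r},\ve{m})$ and $d^{\lambda,1}_M(\ve{m},\ve{c})$, and form the \emph{gluing}
$$s_{ik}\;\defeq\;\sum_{j\,:\,m_j>0} \frac{p^{\star}_{ij}\,q^{\star}_{jk}}{m_j}.$$
The marginal identities $\sum_i p^{\star}_{ij}=m_j=\sum_k q^{\star}_{jk}$ immediately yield $S\in U(\ve{r},\ve{c})$.

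First I would establish the cost bound $\inner{S}{M}\leq \inner{P^{\star}}{M}+\inner{Q^{\star}}{M}$: expand $\inner{S}{M}=\sum_{i,j,k} p^{\star}_{ij}q^{\star}_{jk}m_{ik}/m_j$, apply the entrywise triangle inequality $m_{ik}\leq m_{ij}+m_{jk}$ (valid since $M$ is a metric matrix), and collapse the two resulting triple sums via the marginal identities above. This step is routine and is independent of $\beta$.

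The heart of the proof is the entropy bound $H_1(S)\geq H_1(P^{\star})+H_1(Q^{\star})-2H_1(\ve{m})$. Interpret the three-index array $T_{ijk}\defeq p^{\star}_{ij}q^{\star}_{jk}/m_j$ as the joint law of $(X,Y,Z)$, built so that $X\perp Z\mid Y$; then $P^{\star}$, $Q^{\star}$, $\ve{m}$ and $S$ are respectively the $(X,Y)$-, $(Y,Z)$-, $Y$- and $(X,Z)$-marginals of $T$. The Shannon chain rule together with conditional independence gives
\begin{align*}
H_1(X,Z\mid Y) &= H_1(X\mid Y)+H_1(Z\mid Y)\\
&=\bigl(H_1(P^{\star})-H_1(\ve{m})\bigr)+\bigl(H_1(Q^{\star})-H_1(\ve{m})\bigr),
\end{align*}
and conditioning reduces entropy, so $H_1(S)=H_1(X,Z)\geq H_1(X,Z\mid Y)$, delivering the bound.

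Combining the two bounds with the definition of the Sinkhorn distance,
$$d^{\lambda,1}_M(\ve{r},\ve{c})\;\leq\;\inner{S}{M}-\tfrac{1}{\lambda}H_1(S)\;\leq\;d^{\lambda,1}_M(\ve{r},\ve{m})+d^{\lambda,1}_M(\ve{m},\ve{c})+\tfrac{2}{\lambda}H_1(\ve{m}).$$
Plugging this into (\ref{trot3}), the $(\beta/\lambda)H_1(\ve{r})$ and $(\beta/\lambda)H_1(\ve{c})$ terms cancel on both sides of the would-be triangle inequality, and the whole statement reduces to $(2/\lambda)H_1(\ve{m})\leq (2\beta/\lambda)H_1(\ve{m})$. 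Since $\lambda>0$ and $H_1\geq 0$ on the simplex, this holds exactly when $\beta\geq 1$. The main obstacle --- already flagged in the discussion preceding the theorem --- is recognising that the gluing unavoidably leaves a $2H_1(\ve{m})/\lambda$ surplus (not $H_1(\ve{m})/\lambda$), coming from \emph{two} applications of ``conditioning on $Y$ loses $H_1(\ve{m})$'' in the chain-rule identity; this factor of two is exactly what forces the threshold $\beta\geq 1$ rather than a naive $\beta\geq 1/2$.
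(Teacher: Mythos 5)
Your proof is correct and follows essentially the same route as the paper's: the same gluing construction $s_{ik}=\sum_j p^{\star}_{ij}q^{\star}_{jk}/m_j$, the same cost subadditivity from the metric property of $M$, the same entropy surplus $2H_1(\ve{m})/\lambda$, and the same conclusion that $\beta\geq 1$ absorbs it. The only (harmless) difference is that you obtain the key bound $H_1(S)\geq H_1(P^{\star})+H_1(Q^{\star})-2H_1(\ve{m})$ directly from the Shannon chain rule and conditional independence, whereas the paper derives it from its Tsallis gluing lemma (data-processing for $q$-mutual information, stated for general $q\geq 1$) combined with subadditivity of $H_1$ --- for $q=1$ the two derivations coincide.
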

Hence, the solutions to $d^{\lambda,1}_M$ \textit{are} optimal transport plans for distortions that meet the triangle inequality. This is new compared to \cite{cSD}. For a general $q\geq 1$, the proof, in Supplementary Material (Section \ref{proof_thAxiom}), shows more, namely that $d^{\lambda,q,1/2}_M$ satisfies a weak form of the identity of the indiscernibles. Finally, there always exist a value $\beta\geq 0$ such that $d^{\lambda,q,\beta}_M$ is non negative ($d^{\lambda,q,\beta}_M$ is lowerbounded $\forall \beta \geq 0$).

\section{Efficient \trot~optimizers}\label{sec-soto}

The key idea behind Sinkhorn-Cuturi's solution is that the KKT conditions ensure that the optimal transportation plan $P^\star$ satisfies $P^\star = \text{diag}(\ve{u})\exp(-\lambda M)\text{diag}(\ve{v})$. Sinkhorn's balancing normalization can then directly be used for a fast approximation of $P^\star$ \cite{sDE,sAR}. This trick does not fit at first sight for Tsallis regularization because the $q$-exponential is \textit{not} multiplicative for general $q$ and KKT conditions do not seem to be as favorable. We give however workarounds for the optimization, that work for \textit{any} $q\in \mathbb{R}_+$.

First, we assume wlog that $q\neq 0, 1$ since in those cases, any efficient LP solver ($q=0$) or Sinkhorn balancing normalization ($q=1$) can be used. The task is non trivial because for $q\in (0,1)$, the function minimized in $d^{\lambda,q}_M$ is \textit{not Lipschitz}, which impedes the convergence of gradient methods. In this case, our workaround is Algorithm \ref{alg:SecondOrderSinkhorn} (\sotrot), which relies on a Second Order approximation of a fundamental quantity used in its convergence proof, auxiliary functions \cite{ddlIF}.
\begin{algorithm}[!ht]
\caption{Second Order Row--\trot~(\sotrot)}\label{alg:SecondOrderSinkhorn}
\textbf{Input:}  marginal $\ve{r}$, matrix $M$, params $\lambda \in \mathbb{R}_{+*}, q\in (0,1)$

\begin{algorithmic}[1]
\STATE $A \gets \lambda M$
\STATE {$P \gets \exp_q(-1)\exp_q^{-1}(A)$}
\STATE \textbf{repeat} 
\STATE \quad $P_1 \gets P\oslash A, P_2 \gets P_1 \oslash A$ \quad //$\oslash$ = Kronecker divide
\STATE \quad $\ve{d} \gets \ve{r} - P\ve{1}, \ve{b} \gets P_1\ve{1}, \ve{a} \gets (2-q)P_2\ve{1}$
\STATE \quad \textbf{for} {$i = 1, 2, ..., n$}
\STATE \quad \quad \textbf{if} {$d_i \geq 0$} \textbf{then}
\STATE \quad \quad \quad {$y_i \gets   \frac{- b_i + \sqrt{b_i^2 + 4a_id_i}}{2a_i}$}
\STATE \quad \quad \textbf{else} 
\STATE \quad \quad \quad {$y_i \gets  d_i/b_i$}
\STATE \quad \quad \textbf{end if}
\STATE \quad \quad \textbf{if} $|y_i| > \frac{q}{(6-4q)\cdot \max_j p_{ij}^{1-q}}$ \textbf{then} 
\begin{eqnarray}
y_i & \gets & \frac{q \cdot \mathrm{sign}(r_i - \sum_j p_{ij})}{(6-4q)\cdot \max_j p_{ij}^{1-q}}\:\:.
\end{eqnarray}
\STATE \quad {$A \gets A -\ve{y}\ve{1}^\top$}
\STATE \quad {$P \gets \exp_q(-1)\exp_q^{-1}(A)$}
\STATE \textbf{until} convergence
\end{algorithmic}
\textbf{Output:} $P$  
\end{algorithm}
\begin{theorem}[Convergence of \sotrot]
\label{thSortrot}
For any fixed $q\in (0,1)$, matrix $P$ output by \sotrot~converges to $P^\star$ with:
\begin{eqnarray}
  P^\star & = & \arg\min_{P \in \mathbb{R}_+^{n\times n} : P\ve{1} = \ve{r}} K_{1/q}(P^q , \tilde{U}^q) \:\:.\nonumber
\end{eqnarray}
\end{theorem}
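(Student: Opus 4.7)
The plan is to view \sotrot~as a majorization--minimization scheme applied to the dual of the row-constrained problem, and then combine monotone descent with strict convexity to obtain convergence to the unique minimizer.

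First I would dualize the row constraint $P\ve{1}=\ve{r}$ by multipliers $\ve{\alpha}\in\mathbb{R}^n$ and invoke Theorem~\ref{thUnique}: any stationary point has the form $p_{ij}=\exp_q(-1)\exp_q^{-1}(\alpha_i+\lambda m_{ij})$, which corresponds exactly to the parametrization $A=\lambda M-\ve{\alpha}\ve{1}^\top$ maintained by the algorithm. Under this change of variables the problem reduces to an $n$-dimensional convex program in $\ve{\alpha}$, and the update $A\leftarrow A-\ve{y}\ve{1}^\top$ becomes a coordinate-wise shift whose per-row effect can be analyzed independently.

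Next I would construct an auxiliary function in the sense of \cite{ddlIF}: a separable bound $Q(\ve{y};A)=\sum_i Q_i(y_i)$ on the row-wise change in the dual objective, derived by Taylor-expanding $y_i\mapsto p_{ij}(y_i)$ and bounding the third-order remainder uniformly in $j$. The first two derivatives then produce the quantities $b_i=\sum_j p_{ij}/a_{ij}$ and $a_i=(2-q)\sum_j p_{ij}/a_{ij}^2$, while $d_i=r_i-\sum_j p_{ij}$ is the current constraint violation; setting $Q_i'(y_i)=0$ yields the quadratic $a_i y_i^2+b_i y_i-d_i=0$, whose positive root $y_i=(-b_i+\sqrt{b_i^2+4a_i d_i})/(2a_i)$ is precisely the algorithm's main update in the case $d_i\geq 0$. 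When $d_i<0$ the quadratic curvature contribution points the wrong way and the majorant degenerates to a pure linear bound; the Newton step $y_i=d_i/b_i$ of line~10 is then the appropriate choice. Matching the third-order remainder is what produces both the coefficient $(2-q)$ in $a_i$ and the safe-region radius $q/((6-4q)\max_j p_{ij}^{1-q})$ of line~12: beyond that radius, $Q_i$ ceases to majorize the true per-row objective change, so the step must be clipped back.

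Once the majorization property is established on the safe region, each iteration strictly decreases $K_{1/q}(P^q,\tilde U^q)$ unless $\ve{d}=\mathbf{0}$. Strict convexity of the $q$-relative entropy, together with coercivity of the row-feasible set on the nonnegative orthant, makes the sublevel sets compact, so a standard monotone-descent argument forces the iterates to admit a limit point at which $\ve{d}=\mathbf{0}$; the KKT form of Theorem~\ref{thUnique} then identifies this limit with $P^\star$. I expect the main obstacle to be precisely the Taylor-remainder analysis of the row objective: isolating the constants $(2-q)$ and $(6-4q)/q$, and controlling the third-order term uniformly across columns so that the separable quadratic $Q_i$ is a bona fide majorant on the prescribed safe interval while the step does not collapse as some $p_{ij}\to 0$.
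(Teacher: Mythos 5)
Your proposal follows essentially the same route as the paper: an auxiliary-function (majorization) argument over the dual variables, built from the Taylor expansion of the per-row objective, with the second-order majorant producing exactly the quadratic $a_i y_i^2 + b_i y_i - d_i = 0$ whose positive root is Step 8, the second-order maximizer $d_i/b_i$ for the case $d_i<0$, and the clipping of Step 12 as the guarantee that the truncated expansion remains a valid bound. You correctly anticipate that the whole difficulty is controlling the third-order remainder. The paper handles the two signs of $d_i$ by different series arguments: for $d_i\le 0$ the tail is an alternating series with terms of decreasing modulus, hence of known (negative) sign, while for $d_i\ge 0$ it is dominated by a geometric series under the safe-radius condition, which is where the constants $(2-q)$ and $(6-4q)/q$ arise; your description of the $d_i<0$ case as "degenerating to a pure linear bound" slightly mislabels what is really the alternating-series argument, but the structure matches. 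Where you genuinely diverge is the endgame. The paper does not run monotone descent on the primal objective; it invokes the Della Pietra et al.\ duality: the auxiliary function is shown to equal $D_\phi(\tilde P\,\|\,P) - D_\phi(\tilde P\,\|\,\ve{y}\diamond_q P)$ with $\tilde P=\ve{r}\ve{c}^\top$, so the quantity that provably decreases is the divergence from the fixed reference $\tilde P$ to the iterate; the limit is then identified via the Pythagorean identity, followed by a separate lemma transferring that identity from the Bregman divergence $D_\phi$ to $K_{1/q}(\cdot^q,\cdot^q)$, which is not Bregman. Your KKT identification of the limit is a legitimate and arguably more elementary substitute for that last transfer step, since the single-constraint problem is strictly convex.

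Two cautions. First, your closing claim that each iteration strictly decreases $K_{1/q}(P^q,\tilde U^q)$ is not right as stated: the iterates are not row-feasible until the limit, and the quantity the auxiliary function controls is $D_\phi(\tilde P\,\|\,P_j)$, not the primal objective evaluated at the infeasible iterate $P_j$. Second, compactness cannot come from "coercivity of the row-feasible set" for the same reason; the paper itself has to \emph{assume} that the iterate sequence lies in a compact subset of the closure of the exponential family. Neither issue changes the architecture of your argument, but both must be repaired for the proof to close.
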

The proof (in Supplementary Material, Section \ref{proof_thSortrot}) is involved but interesting in itself because it represents one of the first use of the theory of auxiliary functions outside the realm of Bregman divergences in machine learning \cite{cssLRj,ddlIF}. Some important remarks should be made. First, since \sotrot~uses only one of the two marginal constraints, it would need to be iterated ("\textit{wrapped}"), swapping the row and column constraints like in Sinkhorn balancing. In practice, this is not efficient. Furthermore, iterating \sotrot~over constraint swapping does not necessarily converge. For these reasons, we swap constraints \textit{in} the algorithm, making one iteration of Steps 4-14 over rows, and then one iteration of Steps 4-14 over columns (this boils down to transposing matrices in \sotrot), and so on. This converges, but still is not the most efficient. To improve efficiency we perform two modifications, that do not impede convergence experimentally. First, we remove Step 12. In doing so, we not only save $O(n^2)$ computations for \textit{each} outer loop, we essentially make \sotrot~as parallelizable as Sinkhorn balancing \cite{cSD}. Second, we remarked experimentally that convergence is faster when multiplying $y_i$ by 2 in Step 10, and dividing $a$ by 2 in Step 5.

For simplicity, we still refer to this algorithm (balancing constraints in the algorithm, with the modifications for Steps 5, 10, 12) as \sotrot~in the experiments. 

Last, when $q\geq 1$, the function minimized in $d^{\lambda,q}_M$ becomes Lipschitz. In this case, we take the particular geometry of $U(r,c)$ into account by using mirror gradient methods, which are equivalent to gradient methods projected according to some suitable divergence \cite{btMD}. In our case, we consider Kullback-Leibler divergence, which can save a factor $O(n/\sqrt{\log n})$ iterations \cite{btMD}. Furthermore, the Kullback-Leibler projection can be written in terms of Sinkhorn-Knopp's (SK) algorithm with marginals constraints $\ve{r}, \ve{c}$ \cite{skCN}, as is shown in Algorithm \ref{alg:KL}, named \kltrot~($\otimes$ is Kronecker product).

\begin{algorithm}[t]
\caption{KL Projected Gradient --\trot~(\kltrot)}\label{alg:KL}
\textbf{Input:} {Marginals $\ve{r}, \ve{c}$, Matrix $\tilde{U}$, Gradient steps $\lbrace t_k \rbrace$}

\begin{algorithmic}[1]
\STATE $P^{(0)} \gets \tilde{U}$
\STATE \textbf{repeat} 
\STATE \quad $P^{(k+1)}\gets \mbox{{\small SK}}(P^{(k)}\otimes \exp(-t_k\nabla f_q(P^{(k)})), \ve{r}, \ve{c})$
\STATE \textbf{until} {convergence}
\end{algorithmic}
\end{algorithm}

\begin{theorem}
If $q>1$ and the gradient steps $\lbrace t_k \rbrace$ are s.t.  $\sum_k t_k \rightarrow \infty$ and $\sum_k t_k^2 \ll \infty$, matrix $P$ output by \kltrot~converges to $P^\star$ with:
\begin{eqnarray}
  P^\star & = & \arg\min_{P \in U(\ve{r}, \ve{c})} K_{1/q}(P^q , \tilde{U}^q) \:\:.\nonumber
\end{eqnarray}
\end{theorem}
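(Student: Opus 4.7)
The plan is to recognize \kltrot~as a Bregman (mirror-descent) projected gradient method with Kullback-Leibler divergence as potential, applied to the convex reformulation provided by Lemma \ref{lemd}, and then to invoke the standard convergence theory of \cite{btMD} for diminishing step sizes of Robbins-Monro type.

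First I would set up the variational picture. By Lemma \ref{lemd}, minimizing $d^{\lambda,q}_M$ over $U(\ve r,\ve c)$ is the same, up to the additive constant $g(M)$, as minimizing $f_q(P)\defeq K_{1/q}(P^q,\tilde U^q)$ over $U(\ve r,\ve c)$. Expanding the definition of $K_{1/q}$ and using $(P^q)^{1/q}=P$ gives the clean form
\begin{equation}
f_q(P)=\frac{1}{1-1/q}\sum_{i,j}\Bigl(\tfrac{1}{q}\,p_{ij}^{\,q}+(1-\tfrac{1}{q})\,\tilde u_{ij}^{\,q}-p_{ij}\,\tilde u_{ij}^{\,q-1}\Bigr).\nonumber
\end{equation}
For $q>1$ we have $1-1/q>0$ and $p\mapsto p^{q}$ is strictly convex on $\mathbb R_+$, so $f_q$ is strictly convex and the remaining terms are linear; hence the minimizer $P^\star$ on the convex compact set $U(\ve r,\ve c)$ exists and is unique. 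A direct calculation shows $\bigl(\nabla f_q(P)\bigr)_{ij}$ is proportional to $p_{ij}^{\,q-1}-\tilde u_{ij}^{\,q-1}$, which is continuous and, restricted to the compact polytope $U(\ve r,\ve c)$, bounded and Lipschitz.

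Second I would identify the update in line 3 of \kltrot~with a KL-Bregman projected gradient step. By standard Lagrangian duality, the Kullback-Leibler projection $\arg\min_{P\in U(\ve r,\ve c)}\mathrm{KL}(P\|Q)$ of any positive matrix $Q$ has the scaling form $\operatorname{diag}(\ve u)Q\operatorname{diag}(\ve v)$, which is exactly what the Sinkhorn-Knopp routine $\mathrm{SK}(\cdot,\ve r,\ve c)$ produces \cite{skCN}. Consequently the iteration can be rewritten as
\begin{equation}
P^{(k+1)}=\arg\min_{P\in U(\ve r,\ve c)}\bigl\langle\nabla f_q(P^{(k)}),P\bigr\rangle+\frac{1}{t_k}\,\mathrm{KL}\bigl(P\,\|\,P^{(k)}\bigr),\nonumber
\end{equation}
which is precisely the mirror-descent scheme of \cite{btMD} with the negative Shannon entropy as mirror map on the positive orthant.

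Third I would close by applying the classical convergence guarantee for mirror descent on a convex, gradient-bounded objective over a compact convex set. Under the Robbins-Monro-type step-size conditions $\sum_k t_k=\infty$ and $\sum_k t_k^{\,2}<\infty$, the sequence $\{P^{(k)}\}$ is shown in \cite{btMD} to converge to a minimizer of $f_q$ on $U(\ve r,\ve c)$; strict convexity of $f_q$ then forces $P^{(k)}\to P^\star$ as claimed.

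The main obstacle I anticipate is ensuring that the Bregman projection at every step is well-defined, that is, that the pre-projection matrix $P^{(k)}\otimes\exp(-t_k\nabla f_q(P^{(k)}))$ stays coordinate-wise strictly positive so that Sinkhorn-Knopp balancing converges and the resulting iterate remains in the relative interior of $U(\ve r,\ve c)$. Since $f_q$ and $\nabla f_q$ are finite on all of $\mathbb R_+^{n\times n}$ for $q>1$ (no singular logarithmic barrier as in the Shannon case), and since $P^{(0)}=\tilde U$ is strictly positive, this can be handled by a straightforward induction that uses the uniform bound on $\nabla f_q$ over the sublevel set $\{f_q\le f_q(\tilde U)\}$ to keep the multiplicative update from collapsing any entry to zero; this is the technical core of the argument, with the convex-analysis machinery of \cite{btMD} then delivering convergence.
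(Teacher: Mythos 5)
Your argument is correct and is exactly the route the paper itself indicates: the paper omits the proof with the remark that it ``follows \cite{btMD,skCN}'', i.e.\ recognize \kltrot~as KL-mirror descent on the convex objective $K_{1/q}(P^q,\tilde U^q)$ with the Sinkhorn--Knopp routine realizing the Bregman projection onto $U(\ve r,\ve c)$, and invoke the standard diminishing-step-size convergence theorem. The only minor imprecision is calling $\nabla f_q$ Lipschitz on the polytope (for $1<q<2$ it is only H\"older near the boundary), but mirror descent under Robbins--Monro steps needs only the gradient boundedness you also establish, so the argument stands.
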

(proof omitted, follows \cite{btMD,skCN})
\section{Experiments}\label{sec-exp}

We evaluate empirically the \trot~framework with its application to ecological inference. The dataset we use describes about $10$ millions individual voters from Florida for the 2012 US presidential elections, as obtained from \cite{ikIE}. The data is much richer than is required for ecological inference: surely we could estimate the joint distribution of every voters' available attributes by counting. This is itself a particularly rare case of data quality in political science, where any analysis is often carried out on aggregate measurements. In fact, since ground truth distributions are effectively available, the Florida dataset has been used to test methodological advances in the field \cite{fwsWS,ikIE}. As a demonstrative example, we focus on inferring the distributions of ethnicity and party for all Florida counties.

\textit{Dataset description and preprocessing.}  The data contains the following attributes \emph{for each voter}: location (district, county), gender, age, party (Democrat, Republican, Other), ethnicity (White, African-american, Hispanic, Asian, Native, Other), 2008 vote (yes, no). About 800K voters with missing attributes are excluded from the study. 
Thanks to the richness of the data, marginal probabilities of ethnic groups and parties can be obtained by counting: for each county we obtain marginals $\bm{r}, \bm{c}$ for the optimal transport problems.

\newcommand{\msur}{M^{\mbox{\tiny{sur}}}}
\newcommand{\mnop}{M^{\mbox{\tiny{no}}}}
\newcommand{\mrbf}{M^{\mbox{\tiny{\textsc{rbf}}}}}

\textit{Evaluation assumptions.} Two assumptions are made in terms of information available for inference. First, the ground truth joint distributions for one district are known; we chose district number $3$ which groups $9$ out of $68$ counties of about $285K$ voters in total. This information will be used to tune hyper-parameters. Second, a cost matrix $\mrbf$ is computed based on mean voter's attributes at state level. For the sake of simplicity, we retain only age (normalized in $[0, 1]$), gender and the 2008 vote; notice that in practice geographical attributes may encode relevant information for computing distances between voter behaviours \cite{fwsWS}. We do not use this. For distance matrix $\mrbf$, we aggregate those features over all Florida for each party to obtain the vectors $\bm{\mu}^{\lab{p}}$ of the party's expected profile and for each ethnic group to obtain the vectors $\bm{\mu}^{\lab{e}}$ of the ethnicity's expected profile. The dissimilarity measure relies on a Gaussian kernel between average county profiles:
\begin{eqnarray}
m^{\mbox{\tiny{\textsc{rbf}}}}_{ij} &\defeq & \sqrt{2- 2 \exp( - \gamma \cdot \| \bm{\mu}^{\lab{p}}_i - \bm{\mu}^{\lab{e}}_j \|_2 )}\:\:,
\end{eqnarray}
\noindent with $\gamma = 10$. The given function is actually the Hilbert metric in the RBF space. Table \ref{table:cost-matrix} shows the resulting cost matrix. Notice how it does encode some common-sense knowledge: White and Republican is the best match, while Hispanic and Asians are the worst match with Republican profiles. It is rather surprising that only 3 features such as age, gender and whether people voted at the last election can reflect so well those relative political traits; these results are indeed much in line with survey-based statistics \cite{gallup}. We also try another cost matrix $M$, $\msur$, derived from the ID proportions of parties composition given in \cite{gallup}; $m^{\mbox{\tiny{sur}}}_{ij}$ is computed as $1 - p_{ij}$, where $p_{ij}$ is the proportion of people registered to party $j$ belonging to ethnic group $i$. Finally, we consider a "no prior" matrix $\mnop$, in which $m^{\mbox{\tiny{no}}}_{ij} = 1, \forall i,j$.
 
\textit{Cross-validation of $q$}. We study the solution of \trot~for a grid of $\lambda \in [0.01, 1000], q \in [0.5, 4]$, inferring the joint distributions of all counties of district number 3. We measure average KL-divergence between inferred and ground truth joint distributions. Notice that each county defines a \textit{different} optimal transport problem; inferring the joint distributions for multiple counties at a time is therefore trivial to parallelize. This is somewhat counter-intuitive since we may believe that geographically wider spread data should improve inference at a local level, that is, more data better inference. Indeed, the implicit coupling of the problem is represented by cost matrix, which expresses some prior knowledge of the problem by means of all data from Florida. 

\begin{table}[t]
	\scriptsize
	\centering
	\begin{tabular}{l|cccccc}
	\toprule
	  \diagbox{party}{ethnicity} & white & afro. & hispanic & asian & native & other \\
	\hline
        Democrat & $0.29$ & $0.38$ & $0.55$ & $0.55$ &  $0.37$ & $0.57$ \\
        Republican & $0.18$ & $0.63$ & $0.76$ & $0.84$ & $0.54$ & $0.72$ \\
        Other & $0.74$ & $0.62$ &  $0.27$ &  $0.24$ &  $0.41$ & $0.23$ \\
	\bottomrule
	\end{tabular}
	\caption{Visualization of the cost matrix as $M$: small values indicate high similarity. Highest similarity: (white, Republican); lowest similarity: (asian, Republican) followed by (hispanic, Republican).}
	\label{table:cost-matrix}
\end{table}

\begin{table}[t]
	\centering
    \scriptsize
	\begin{tabular}{l|ccccc}
    \toprule
	  Algorithm \hspace{-0.21cm} & \hspace{-0.21cm} $M$ \hspace{-0.21cm} & \hspace{-0.21cm} $q$ \hspace{-0.21cm} & \hspace{-0.21cm} $\lambda$ \hspace{-0.21cm} & \hspace{-0.21cm}  KL-divergence $\pm$ SD \hspace{-0.21cm} & \hspace{-0.21cm} Abs. error $\pm$ SD \\
	\midrule
        {\tiny Florida-Average} \hspace{-0.21cm} & \hspace{-0.21cm} - \hspace{-0.21cm} & \hspace{-0.21cm} - \hspace{-0.21cm} & \hspace{-0.21cm} - \hspace{-0.21cm} & \hspace{-0.21cm} $0.251 \pm 0.187$ \hspace{-0.21cm} & \hspace{-0.21cm} $0.025 \pm 0.011$\\
        \hline
        Simplex \hspace{-0.21cm} & \hspace{-0.21cm} $\mrbf$ \hspace{-0.21cm} & \hspace{-0.21cm} - \hspace{-0.21cm} & \hspace{-0.21cm} - \hspace{-0.21cm} & \hspace{-0.21cm} $0.280 \pm 0.108$ \hspace{-0.21cm} & \hspace{-0.21cm} $0.023 \pm 0.008$\\
        Simplex \hspace{-0.21cm} & \hspace{-0.21cm} $\msur$ \hspace{-0.21cm} & \hspace{-0.21cm} - \hspace{-0.21cm} & \hspace{-0.21cm} - \hspace{-0.21cm} & \hspace{-0.21cm} $0.136 \pm 0.098$ \hspace{-0.21cm} & \hspace{-0.21cm} $0.013  \pm 0.009$\\
        \hline
        Sinkhorn \hspace{-0.21cm} & \hspace{-0.21cm} $\mrbf$ \hspace{-0.21cm} & \hspace{-0.21cm} $\:\: 1.0^{\dagger}$ \hspace{-0.21cm} & \hspace{-0.21cm} $10^{0}$ \hspace{-0.21cm} & \hspace{-0.21cm} $0.054 \pm 0.036$ \hspace{-0.21cm} & \hspace{-0.21cm} $0.009 \pm 0.005$\\
        Sinkhorn \hspace{-0.21cm} & \hspace{-0.21cm} $\msur$ \hspace{-0.21cm} & \hspace{-0.21cm} $\:\: 1.0^{\dagger}$ \hspace{-0.21cm} & \hspace{-0.21cm} $10^{1}$ \hspace{-0.21cm} & \hspace{-0.21cm} $0.035 \pm 0.027$ \hspace{-0.21cm} & \hspace{-0.21cm} $0.007 \pm 0.004$\\
\hline
        \trot \hspace{-0.21cm} & \hspace{-0.21cm} $\mrbf$ \hspace{-0.21cm} & \hspace{-0.21cm} $1.0$ \hspace{-0.21cm} & \hspace{-0.21cm} $10^{0}$ \hspace{-0.21cm} & \hspace{-0.21cm} $0.054 \pm 0.036$ \hspace{-0.21cm} & \hspace{-0.21cm} $0.009 \pm 0.005$\\
        \trot \hspace{-0.21cm} & \hspace{-0.21cm} $\msur$ \hspace{-0.21cm} & \hspace{-0.21cm} $2.8$ \hspace{-0.21cm} & \hspace{-0.21cm} $10^1$ \hspace{-0.21cm} & \hspace{-0.21cm} $\bm{0.007 \pm 0.009}$ \hspace{-0.21cm} & \hspace{-0.21cm} $\bm{0.003 \pm 0.002}$\\ \hline
        \trot \hspace{-0.21cm} & \hspace{-0.21cm}~$\mnop$ \hspace{-0.21cm} & \hspace{-0.21cm} $0.8$ \hspace{-0.21cm} & \hspace{-0.21cm} $10^{0}$ \hspace{-0.21cm} & \hspace{-0.21cm} $0.076 \pm 0.048$ \hspace{-0.21cm} & \hspace{-0.21cm} $ 0.011 \pm 0.005$ \\
	\bottomrule
	\end{tabular}
	\caption{Average KL-divergence and absolute error with standard deviation (SD) of algorithms inferring joint distributions of all Florida counties. Parameters noted with $\dagger$ are not cross-validated but defined by the algorithm.}
	\label{table:kl}
\end{table}

\textit{Baselines and comparisons with other methods.} To evaluate quantitatively the solution of \trot~is useful to define a set of baseline methods: i) Florida-average, which the same state-level joint distribution (assumed prior knowledge) for each of the 67 county; ii) Simplex, that is the solution of optimal transport with no regularization as given by the Simplex algorithm; iii) Sinkhorn(-Cuturi)'s algorithm, which is \trot~with $q=1$; iv) \trot. ii-iv are tested with $M \in \{\mrbf, \msur\}$, and we provide in addition the results for \trot~with $M=\mnop$. Hyper-parameters are cross-validated independently for each algorithm. 

Table \ref{table:kl} reports a quantitative comparison. From the most general to the most specific, there are three remarks to make. First, optimal transport can be (but is not always) better than the default distribution (Florida average). Second, 
\textit{regularizing} optimal transport consistently improves upon these baselines. Third, \trot~successfully matches Sinkhorn's approach when $q=1$ is be the best solution in \trot's range of $q$ ($M = \mrbf$), and 
manages to tune $q$ to significantly beat Sinkhorn's when better alternatives exist: with $M=\msur$, \trot~divides the expected KL divergence by more than \textit{seven} (7) compared to Sinkhorn. This is a strong advocacy to allow for the tuning of $q$. Notice that in this case, $\lambda$ is larger compared to $M=\mrbf$, which makes sense since $M=\msur$ is more accurate for the optimal transport problem (see the Simplex results) and so the weight of the regularizer predictably decreases in the regularized optimal transport distance. We conjecture that $M=\msur$ beats $M=\mrbf$ in part because it is somehow finer grained: $\mrbf$ is computed from sufficient statistics for the marginals alone, while $\msur$ exploits information computed from the cartesian product of the supports.
Figure \ref{fig:scatter} compares all 1 836 inferred probabilities ($3 \times 6$ per county) with respect to the ground truth for Sinkhorn vs \trot~using $M=\msur$. Remark that the figures in Table \ref{table:kl} translate to per-county ecological inference results that are significantly more in favor of \trot, which basically has no "hard-to-guess" counties compared to Sinkhorn for which the absolute difference between inference and ground truth can exceed 10$\%$.

To finish up, additional experiments, displayed in \sm~(Sections \ref{exp_expFAV} and \ref{exp_expSURV}) also show that \trot~with $M=\msur$ manages to have a distribution of per county errors extremely peaked around zero error, compared to the simplest baselines (Florida average and \trot~with $M=\mnop$). These are good news, but there are some local discrepancies. For example, there exists \textit{one} county on which \trot~with $M=\msur$ is beaten by \trot~with $M=\mnop$.

\begin{figure}[t]
  \centering
    \includegraphics[trim=30bp 10bp 40bp 40bp,clip,width=0.9\columnwidth]{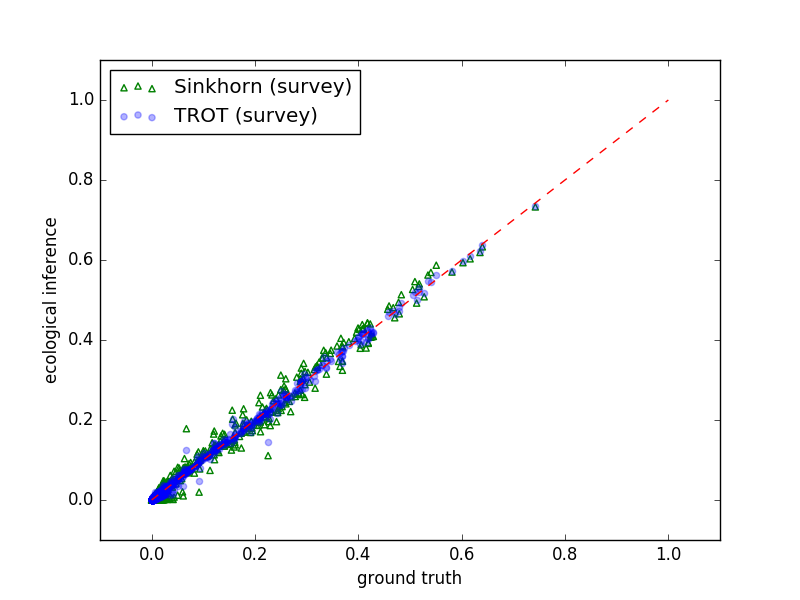}
    \caption{Correlation between \trot~vs Sinkhorn inferred probabilities and ground truth for all Florida counties (the closer to $y=x$, the better).}
    \label{fig:scatter}
\end{figure}

\section{Discussion and conclusion}\label{sec-con}

In this paper, we have bridged Shannon regularized optimal transport and unregularized optimal transport, via Tsallis entropic regularization.
There are three main motivations to the generalization, the two first have already been discussed: \trot~allows to keep the properties of Sinkhorn distances, and fields like ecological inference bring natural applications for the general \trot~family. The application to ecological inference is also interesting because the main unknown is the optimal transportation plan and not necessarily the transportation distance obtained.
The third and last motivation is important for applications at large and ecological inference in particular. \trot~spans a subset of $f$-divergences, and $f$-divergences satisfy the information monotonicity property that coarse graining does not increase the divergence \cite[$\S$ 3.2]{aIG}. Furthermore, $f$-divergences are invariant under diffeomorphic transformations \cite[Theorem 1]{qmAS}. This is a powerful statement: if the ground metric is affected by such a transformation $h$ (for example, we change the underlying manifold coordinate system, \textit{e.g.} for privacy reasons), then, from the optimal \trot~transportation plan $P^\star$, the transportation plan corresponding to the initial coordinate system can be recovered from the \textit{sole} knowledge of $h^{-1}$. 

The algorithms we provide allow for the efficient optimization of the regularized optimal transport for all values of $q\geq 0$, and include notable cases for which conventional gradient-based approaches would probably not be the best approaches due to the fact that the function to optimize is not Lipschitz for the $q$ chosen. In fact, the main notable downside of the generalization is that we could not prove the same (geometric) convergence rates as the ones that are known for Sinkhorn's approach \cite{flOT}. 

Our results display that there can be significant discrepancies in the regularized optimal transport results depending on how cost matrix $M$ is crafted, yet the information we used for our best experiments is readily available from public statistics (matrices $\mrbf, \msur$). Even the instantiation without prior knowledge ($M = \ve{1}\ve{1}^\top$) does not strictly fail in returning useful solutions (compared \textit{e.g.} to Florida average and unregularized optimal transport). This may be a strong advocacy to use \trot~even on domains for which little prior knowledge is available.

\section*{Acknowledgments}

The authors wish to thank Seth Flaxman and Wendy K. Tam Cho for numerous stimulating discussions. Work done while Boris Muzellec was visiting Nicta / Data61. Nicta was funded by the Australian Government through the Department of Communications and the Australian Research Council through the ICT Center of Excellence Program.

\bibliography{bibgen}

\newpage 

\section{Supplementary Material --- Table of contents}

\noindent \textbf{Supplementary material on proofs} \hrulefill Pg \pageref{proof_proofs}\\
\noindent Proof of Theorem \ref{thDist}\hrulefill Pg \pageref{proof_thDist}\\
\noindent Proof of Theorem \ref{thUnique}\hrulefill Pg \pageref{proof_thUnique}\\
\noindent Proof of Theorems \ref{thAxiom} and \ref{thdist3}\hrulefill Pg \pageref{proof_thAxiom}\\
\noindent Proof of Theorem \ref{thSortrot}\hrulefill Pg \pageref{proof_thSortrot}\\

\noindent \textbf{Supplementary material on experiments} \hrulefill Pg
\pageref{exp_expes}\\
\noindent Per county error distribution, \trot~survey vs Florida average\hrulefill Pg \pageref{exp_expFAV}\\
\noindent Per county errors, \trot~survey vs \trot~$\ve{1}\ve{1}^\top$\hrulefill Pg \pageref{exp_expSURV}

\newpage

\section*{Supplementary Material: proofs}\label{proof_proofs}

\section{Proof of Theorem \ref{thDist}}\label{proof_thDist}

Let $M\in\mathbb{R}_+^{n\times n}$ be a distance matrix, and $q,q'\in\mathbb{R} - \lbrace 1\rbrace, q \neq q'$ (the case when $q=1$ xor $q'=1$ can be treated in a similar fashion). We suppose wlog that the support does not reduce to a singleton (otherwise the solution to optimal transport is trivial).
Rescaling $M$ and a constant row vector and a constant column vector, the solution of \trot$(q,\lambda,M)$ can be written wlog as
\begin{eqnarray}
p_{ij} & = & \exp_q(-1)\exp_q^{-1}(m_{ij})\:\:.
\end{eqnarray}
Assume there exists a $\lambda'\in\mathbb{R}$ such that the solution of \trot$(q',\lambda',M)$ is equal to that of \trot$(q,\lambda,M)$. This is equivalent to saying that there exists $\bm{\alpha},\bm{\beta}\in\mathbb{R}^n$ such that
\begin{eqnarray}
\exp_q(m_{ij}) & = & \exp_{q'}(\alpha_i+\lambda'm_{ij}+\beta_j)\:\:, \forall i,j\:\:.
\end{eqnarray}
Composing with $\log_{q'}$ and rearranging, this implies that
\begin{eqnarray}
f_{q',q}^{\lambda'}(m_{ij}) & = & \alpha_i+\beta_j\:\:,\forall i,j\:\:,\label{defFF}
\end{eqnarray}
where
\begin{eqnarray}
f_{q',q}^{\lambda'}(x) & \defeq & \log_{q'}\circ\exp_q - \lambda'\mathrm{Id}\:\:.
\end{eqnarray}
Now, remark that, since $M$ is a distance, $m_{ii} = 0, \forall i$ because of the identity of the indiscernibles, and so $\alpha_i+\beta_i = f_{q',q}^{\lambda'}(0) = 0$, implying $\ve{\alpha}=-\ve{\beta}$. $f_{q',q}^{\lambda'}$ is differentiable. Let:
\begin{eqnarray}
g_{q',q}^{\lambda'} (x) & \defeq & \frac{\mathrm{d}}{\mathrm{d}x} f_{q',q}^{\lambda'} (x) \nonumber\\
 & = & \exp_q^{q-q'}(x) - \lambda'\:\:;\\
h_{q',q}^{\lambda'} (x) & \defeq & \frac{\mathrm{d}}{\mathrm{d}x} g_{q',q}^{\lambda'} (x) \nonumber\\
 & = & (q-q')\cdot \exp_q^{2q-q'-1}(x)\:\:.
\end{eqnarray}
If we assume wlog that $q>q'$, then $g_{q',q}^{\lambda'}$ is increasing and zeroes at most once over $\mathbb{R}$, eventually on some $m^*$ that we define as $m^* = \log_q\left({\lambda'}^{\frac{1}{q-q'}}\right)$ if $(\lambda'>1) \wedge (0\in \mathrm{Im} g_{q',q}^{\lambda'})$ (and $+\infty$ otherwise).
Notice that $m^* >0$ and $f_{q',q}^{\lambda'}$ is bijective over $(0,m^*)$. Suppose wlog that $m_{ij}\leq m^*, \forall i, j$. Otherwise, all distances are scaled by the same real so that $m_{ij}\leq m^*, \forall i, j$: this does not alter the property of $M$ being a distance. A distance being symmetric, we also have $m_{ij} = m_{ji}$ and since $f_{q',q}^{\lambda'}$ is strictly increasing in the range of distances, then we get from eq. (\ref{defFF}) that $\alpha_i+\beta_j = \alpha_j+\beta_i, \forall i, j$ and so $\alpha_i - \alpha_j = \beta_i - \beta_j = -(\alpha_i - \alpha_j)$ (since $\ve{\alpha}=-\ve{\beta}$). Hence, there exists a real $\alpha$ such that $\ve{\alpha} = \alpha\cdot \ve{1}$. We get, in matrix form
\begin{eqnarray}
f_{q',q}^{\lambda'}(M) & = & \ve{\alpha}\ve{1}^\top + \ve{1}\ve{\beta}^\top\label{eqRank}\\
 & = & \alpha\cdot \ve{1}\ve{1}^\top - \alpha\cdot \ve{1} \ve{1}^\top = 0\:\:.
\end{eqnarray}
Hence, $m_{ij} = m_{ii}, \forall i,j$ and the support reduces to a singleton (because of the identity of the indiscernibles), which is impossible.

Remark that the proof also works when $M$ is not a distance anymore, but for example contains all arbitrary non negative matrices. To see this, we remark that the right hand side of eq. (\ref{eqRank}) is a matrix of rank no larger than 2. Since $f_{q',q}^{\lambda'}$ is continuous, we have 
\begin{eqnarray*}
\mathrm{Im}(f_{q',q}^{\lambda'}) & \defeq & \mathcal{I} \subseteq \mathbb{R}
\end{eqnarray*} 
where $\mathcal{I}$ is not reduced to a singleton and so the left hand side of eq. (\ref{eqRank}) spans matrices of arbitrary rank. Hence, eq. (\ref{eqRank}) cannot always hold. 

\section{Proof of Theorem \ref{thUnique}}\label{proof_thUnique}
Denote
\begin{eqnarray*}
f_{ij}: p_{ij}\rightarrow p_{ij}m_{ij} -\frac{1}{\lambda (1-q)}(p_{ij}^q-p_{ij})\:\:.
\end{eqnarray*}
$f_{ij}$ is twice differentiable on $\mathbb{R}_{+*}$, and 
\begin{eqnarray*}
\frac{\mathrm{d}^2}{\mathrm{d}x^2}f_{ij}(x) & = & \frac{q}{\lambda}x^{q-2} > 0
\end{eqnarray*}
for any fixed $q>0$, and so $f_{ij}$ is strictly convex on $\mathbb{R}_{+*}$. We also remark that $U(\ve{r},\ve{c})$ is a non-empty compact subset of $\mathbb{R}^{n \times n}$. Indeed, $\ve{r}\ve{c}^\top \in U(\ve{r},\ve{c})$, $\forall P\in U(\ve{r},\ve{c}), \Vert P\Vert_1 = 1$ (which proves boundedness) and $U(\ve{r},\ve{c})$ is a closed subset of $U(\ve{r},\ve{c})$ (being the intersection of the pre-images of singletons by continuous functions). Hence, since $\inner{P}{M} -\frac{1}{\lambda}H_q(P) = \sum_{i,j}f_{ij}(p_{ij})$, there exists a unique minimum of this function in $U(\ve{r},\ve{c})$.\\

To prove the analytic shape of the solution, we remark that \trot($q,\lambda,M$) consists in minimizing a convex function given a set of affine constraints, and so the KKT conditions are necessary and sufficient. The KKT conditions give
\begin{eqnarray*}
p_{ij} & = & \exp_q(-1)\exp_q^{-1}(\alpha_i + \lambda m_{ij} + \beta_j)\:\:,
\end{eqnarray*}
where $\ve{\alpha}, \ve{\beta} \in \mathbb{R}^n$ are Lagrange multipliers. 

Finally, let us show that Lagrange multipliers $\ve{\alpha}, \ve{\beta}\in \mathbb{R}^n$ are unique up to an additive constant. Assume that $\ve{\alpha},\ve{\alpha}',\ve{\beta},\ve{\beta}' \in \mathbb{R}^n$ are such that 
	
\begin{align*}
\forall i,j, p_{ij} &= \exp_q(-1)\exp_q^{-1}(\lambda m_{ij} + \alpha_i + \beta_j)\\
&= \exp_q(-1)\exp_q^{-1}(\lambda m_{ij} + \alpha_i' + \beta_j')\:\:,
\end{align*}
where $P$ is the unique solution of \trot($q,\lambda,M$). This implies 
\begin{eqnarray*}
\alpha_i + \beta_j & = & \alpha_i' + \beta_j'\:\:, \forall i,j\:\:,
\end{eqnarray*}
\textit{i.e.} 
\begin{eqnarray*}
\alpha_i -\alpha_i' & = & \beta_j' - \beta_j\:\:, \forall i,j\:\:.
\end{eqnarray*} 
In particular, if there exists $i_0$ and $C\neq 0$ such that $\alpha_{i_0} -\alpha_{i_0}' = C$, then $\forall j, \beta_j' = \beta_j + C$ and in turn $\forall i, \alpha_i = \alpha_i'+ C$, which proves our claim.

\section{Proof of Theorems \ref{thAxiom} and \ref{thdist3}}\label{proof_thAxiom}

For reasons that we explain now, we will in fact prove Theorem \ref{thdist3} before we prove Theorem \ref{thAxiom}.

\noindent Had we chosen to follow \cite{cSD}, we would have replaced \trot($q,\lambda,M$) by:
\begin{eqnarray}
d_{M,\alpha, q}(\ve{r},\ve{c}) & \defeq & \min_{\substack{P \in U(\ve{r},\ve{c})\\ H_q(P) - H_q(\ve{r}) - H_q(\ve{c}) \geq \alpha}} \inner{P}{M}\:\:,\label{trot2}
\end{eqnarray}
for some $\alpha > 0$. Both problems are equivalent since $\lambda$ in \trot($q,\lambda,M$) plays the role of the Lagrange multiplier for the entropy constraint in eq. (\ref{trot2}) \cite[Section 3]{cSD}, and so \textit{there exists} an equivalent value of $\alpha^*$ for which both problems coincide:
\begin{eqnarray}
d_{M,\alpha^*, q}(\ve{r},\ve{c}) & = & d^{\lambda,q}_M (\ve{r},\ve{c})\:\:,\label{const1}
\end{eqnarray}
so eq. (\ref{trot2}) indeed matches \trot($q,\lambda,M$). It is clear from eq. (\ref{const1}) that $\alpha$ does not depend solely on $\lambda$, \textit{but also} (eventually) on all other parameters, including $\ve{r},\ve{c}$.

This would not be a problem to state the triangle inequality \textit{for} $d_{M,\alpha, q}$, as in \cite{cSD} ($\forall \ve{x}, \ve{y}, \ve{z} \in \bigtriangleup_n$):
\begin{eqnarray}
d_{M, \alpha, q}(\ve{x},\ve{z}) & \leq & d_{M, \alpha, q}(\ve{x},\ve{y})+d_{M, \alpha, q}(\ve{y},\ve{z})\:\:.\label{defdist}
\end{eqnarray}
However, $\alpha$ is \textit{fixed} and in particular different from the $\alpha^*$ that guarantee eq. (\ref{const1}) --- and there might be three different sets of parameters for $d^{\lambda,q}_M$ as it would equivalently appear from eq. (\ref{defdist}). Under the simplifying assumption that only $\lambda$ changes, we might just get from eq. (\ref{defdist}):
\begin{eqnarray}
d^{\lambda^*,q}_M (\ve{x},\ve{z}) & \leq & d^{{\lambda'}^*,q}_M (\ve{x},\ve{y}) + d^{{\lambda''}^*,q}_M (\ve{y},\ve{z})\:\:, 
\end{eqnarray}
with ${\lambda}^*\neq {\lambda'}^*\neq {\lambda''}^*$. Worse, the transportation plans may change with $\lambda$: for example, we may have
\begin{eqnarray*}
\arg\min_{P \in  U(\ve{x},\ve{z})} d^{\lambda_1,q}_M (\ve{x},\ve{z}) & \neq & \arg\min_{P \in  U(\ve{x},\ve{z})} d^{\lambda_2,q}_M (\ve{x},\ve{z}) \:\:,
\end{eqnarray*}
with $\lambda_1 \neq \lambda_2$ and $\lambda_1, \lambda_2 \in \{{\lambda}^*, {\lambda'}^*, {\lambda''}^*\}$. So, the triangle inequality for $d^{\lambda,q}_M$ that follows from ineq. (\ref{defdist}) does not allow to control the parameters of \trot($q,\lambda,M$) nor the optimal transportation plans that follows. It does not show a problem in regularizing the optimal transport distance, but rather that the distance $d_{M,\alpha, q}$ chosen from eq. (\ref{const1}) does not completely fulfill its objective in showing that regularization in $d^{\lambda,q}_M$ still keeps some of the attractive properties that unregularized optimal transport meets.

To bypass this problem and establish a statement involving a distance in which all parameters are in the clear and optimal transportation plans still coincide with $d^{\lambda,q}_M$, we chose to rely on measure:
\begin{eqnarray}
d^{\lambda,q,\beta}_M (\ve{r},\ve{c}) & \defeq & \min_{P \in  U(\ve{r},\ve{c})} \inner{P}{M} \nonumber\\
 & & - \frac{1}{\lambda}\cdot \left(H_q(P) - \beta\cdot( H_q(\ve{r}) + H_q(\ve{c}))\right)\:\:,\nonumber
\end{eqnarray}
where $\beta$ is some \textit{constant}. There is one trivial but crucial fact about $d^{\lambda,q,\beta}_M (\ve{r},\ve{c})$: regardless of the choice of $\beta$, its optimal transportation plan is the \textit{same} as for \trot($q,\lambda,M$).
\begin{lemma}
For any $\ve{r}, \ve{c} \in \bigtriangleup_n$ and constant $\beta \in \mathbb{R}$, let
\begin{eqnarray}
P_1 & \defeq & \arg\min_{P \in  U(\ve{r},\ve{c})} \inner{P}{M} \nonumber\\
 & & - \frac{1}{\lambda}\cdot \left(H_q(P) - \beta\cdot( H_q(\ve{r}) + H_q(\ve{c}))\right)\:\:.\\
P_2 & \defeq & \arg\min_{P \in  U(\ve{r},\ve{c})} \inner{P}{M} \nonumber\\
 & & - \frac{1}{\lambda}\cdot \left(H_q(P) \right)\:\:.
\end{eqnarray}
Then $P_1 = P_2$.
\end{lemma}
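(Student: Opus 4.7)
The plan is to observe that the only difference between the two objective functions defining $P_1$ and $P_2$ is the additive term $\frac{\beta}{\lambda}(H_q(\ve{r}) + H_q(\ve{c}))$. Since the feasible set is $U(\ve{r},\ve{c})$, the marginals $\ve{r}$ and $\ve{c}$ are fixed throughout the minimization, so this term is a constant with respect to the optimization variable $P$. Adding a constant to an objective function does not change its argmin.

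More precisely, I would first write
\begin{eqnarray*}
\inner{P}{M} - \frac{1}{\lambda}\left(H_q(P) - \beta(H_q(\ve{r})+H_q(\ve{c}))\right) & = & \left(\inner{P}{M} - \frac{1}{\lambda}H_q(P)\right) + C(\ve{r},\ve{c}),
\end{eqnarray*}
where $C(\ve{r},\ve{c}) \defeq \frac{\beta}{\lambda}(H_q(\ve{r})+H_q(\ve{c}))$ is independent of $P$. Then for any $P \in U(\ve{r},\ve{c})$, the value of the first objective differs from the second by exactly $C(\ve{r},\ve{c})$. Hence the two objectives have the same set of minimizers over $U(\ve{r},\ve{c})$.

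To conclude $P_1 = P_2$ (not merely set equality), I would then invoke Theorem \ref{thUnique}, which guarantees that the minimizer of $\inner{P}{M} - (1/\lambda) H_q(P)$ over $U(\ve{r},\ve{c})$ is unique (for any fixed $q > 0$, via strict convexity of the Tsallis-regularized objective on the compact convex polytope $U(\ve{r},\ve{c})$). Since both argmins coincide with the unique minimizer of the common objective up to the constant $C(\ve{r},\ve{c})$, we obtain $P_1 = P_2$. There is no real obstacle here; the lemma is an immediate bookkeeping observation whose point is to justify the use of $d^{\lambda,q,\beta}_M$ as a proxy for $d^{\lambda,q}_M$ in the triangle-inequality analysis without disturbing the transportation plans.
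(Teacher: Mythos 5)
Your proof is correct and matches the paper's (implicit) reasoning exactly: the paper states this lemma as a ``trivial but crucial fact'' without a written proof, relying on precisely your observation that the added term $\frac{\beta}{\lambda}(H_q(\ve{r})+H_q(\ve{c}))$ is constant over $U(\ve{r},\ve{c})$ and hence does not affect the argmin. Your additional appeal to Theorem \ref{thUnique} to upgrade equality of minimizer sets to equality of the (unique) minimizers is a sensible bit of rigor that the paper leaves implicit.
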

\begin{theorem}\label{thdist2}
The following holds for any fixed $q\geq 1$ (unless otherwise stated):
\begin{itemize}
\item for \textbf{any} $\beta \geq 1$, $d^{\lambda,1,\beta}_M$ satisfies the triangle inequality;
\item for the choice $\beta = 1/2$, $d^{\lambda,q,1/2}_M$ satisfies the following weak version of the identity of the indiscernibles: if $\ve{r} = \ve{c}$, then $d^{\lambda,q,1/2}_M(\ve{r},\ve{c})\leq 0$.
\item for the choice $\beta = 1/2$, $\forall \ve{r} \in \bigtriangleup_n$, choosing the (no) transportation plan $P = \mathrm{Diag}(\ve{r})$ brings
\begin{eqnarray*}
\inner{P}{M} - \frac{1}{\lambda}\cdot \left(H_q(P) - \frac{1}{2}\cdot( H_q(\ve{r}) + H_q(\ve{r}))\right) & = & 0\:\:.
\end{eqnarray*}
\end{itemize}
\end{theorem}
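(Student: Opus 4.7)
The plan is to dispatch the three items in increasing order of difficulty: items~2 and~3 both follow from a single explicit calculation on the diagonal plan, while item~1 (the triangle inequality) requires the ``gluing'' construction of \cite{cSD} together with a careful two-way entropy identity.

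For item~3, fix $\ve{r}\in\bigtriangleup_n$ and take $P\defeq\mathrm{Diag}(\ve{r})$. Since $M$ is a metric matrix, $m_{ii}=0$ for all $i$, so $\inner{P}{M}=\sum_i r_i m_{ii}=0$. The vector $\vectorized{P}$ has the same nonzero entries as $\ve{r}$ (zero entries contributing nothing to $H_q$ for $q>0$), so $H_q(P)=H_q(\ve{r})$; plugging into the objective with $\beta=1/2$ gives $0-(1/\lambda)\bigl(H_q(\ve{r})-\tfrac{1}{2}(H_q(\ve{r})+H_q(\ve{r}))\bigr)=0$. Item~2 follows immediately: when $\ve{r}=\ve{c}$, the plan $P=\mathrm{Diag}(\ve{r})$ lies in $U(\ve{r},\ve{c})$, so item~3 exhibits a feasible point of objective $0$, forcing the minimum $d^{\lambda,q,1/2}_M(\ve{r},\ve{c})\leq 0$.

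For item~1, I would fix $\ve{x},\ve{y},\ve{z}\in\bigtriangleup_n$, pick optimal transportation plans $P\in U(\ve{x},\ve{y})$ and $Q\in U(\ve{y},\ve{z})$ attaining $d^{\lambda,1}_M(\ve{x},\ve{y})$ and $d^{\lambda,1}_M(\ve{y},\ve{z})$, and form the glued plan $T_{ik}\defeq\sum_j P_{ij}Q_{jk}/y_j$ (with the convention $0/0=0$), which lies in $U(\ve{x},\ve{z})$. Two ingredients then combine. First, applying $m_{ik}\leq m_{ij}+m_{jk}$ under the sum and using $\sum_k Q_{jk}=y_j=\sum_i P_{ij}$ yields $\inner{T}{M}\leq\inner{P}{M}+\inner{Q}{M}$. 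Second, reading $\pi_{ijk}\defeq P_{ij}Q_{jk}/y_j$ as a joint law on $(X,Y,Z)$ in which $X\perp Z\mid Y$, the two chain-rule decompositions of Shannon entropy give simultaneously $H_1(\pi)=H_1(P)+H_1(Q)-H_1(\ve{y})$ and $H_1(\pi)=H_1(T)+H_1(Y\mid X,Z)\leq H_1(T)+H_1(\ve{y})$, yielding the key lower bound $H_1(T)\geq H_1(P)+H_1(Q)-2H_1(\ve{y})$.

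Plugging both bounds into $d^{\lambda,1}_M(\ve{x},\ve{z})\leq\inner{T}{M}-(1/\lambda)H_1(T)$ and rearranging produces
\begin{eqnarray*}
d^{\lambda,1}_M(\ve{x},\ve{z}) & \leq & d^{\lambda,1}_M(\ve{x},\ve{y})+d^{\lambda,1}_M(\ve{y},\ve{z})+\frac{2}{\lambda}H_1(\ve{y})\:\:.
\end{eqnarray*}
Since $H_1(\ve{y})\geq 0$, the slack $2/\lambda$ can be enlarged to $2\beta/\lambda$ for any $\beta\geq 1$, and adding $(\beta/\lambda)(H_1(\ve{x})+H_1(\ve{z}))$ to both sides matches the definition of $d^{\lambda,1,\beta}_M$, delivering the triangle inequality. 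The main obstacle I foresee is orienting the entropy inequality correctly: the obvious chain-rule manipulation yields the \emph{upper} bound $H_1(T)\leq H_1(P)+H_1(Q)-H_1(\ve{y})$, which goes the wrong way; the workaround is to exploit the \emph{other} decomposition of $H_1(\pi)$ through $H_1(T)$, at the cost of paying an extra $H_1(\ve{y})$ in the lower bound, and it is precisely this extra term that forces $\beta\geq 1$.
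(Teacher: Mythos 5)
Your proof is correct, and its overall architecture coincides with the paper's: items~2 and~3 are handled by the same direct computation on $P=\mathrm{Diag}(\ve{r})$ (using $m_{ii}=0$ and $H_q(\mathrm{Diag}(\ve{r}))=H_q(\ve{r})$), and item~1 uses the same gluing construction, the triangle inequality of $M$ to get $\inner{T}{M}\leq\inner{P}{M}+\inner{Q}{M}$, and an entropy slack of $2H_1(\ve{y})$ that is absorbed precisely when $\beta\geq 1$. The one place where you genuinely deviate is the derivation of the key bound $H_1(P)+H_1(Q)-H_1(T)\leq 2H_1(\ve{y})$: you obtain it from two chain-rule decompositions of the three-way joint entropy of $(X,Y,Z)$ (one through $H_1(X,Y)$ using $X\perp Z\mid Y$, one through $H_1(X,Z)$ using that conditioning reduces entropy), whereas the paper proves a ``refined gluing lemma'' asserting the data-processing inequality $H_q(S)\geq H_q(P)+H_q(\ve{z})-H_q(\ve{y})$ for general Tsallis entropy with $q\geq 1$ (via Furuichi's results on Tsallis conditional and mutual entropies) and then adds subadditivity $H_1(Q)\leq H_1(\ve{y})+H_1(\ve{z})$. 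For $q=1$ the two routes produce the identical inequality, and yours is the more elementary and self-contained; what the paper's route buys is the $q\geq 1$ generality of the gluing lemma, which is reused to prove Theorem~\ref{thAxiom} but is not needed for the $q=1$ triangle inequality claimed here. Your closing observation that the na\"{\i}ve chain rule gives the wrongly-oriented bound $H_1(T)\leq H_1(P)+H_1(Q)-H_1(\ve{y})$, and that the extra $H_1(\ve{y})$ paid in the correctly-oriented bound is exactly what forces $\beta\geq 1$, accurately identifies why the plain Sinkhorn distance fails the triangle inequality while $d^{\lambda,1,\beta}_M$ satisfies it.
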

\noindent \textbf{Remark}: the last property is trivial but worth stating since the (no) transportation plan $P = \mathrm{Diag}(\ve{r})$ also satisfies $P = \arg \min_{Q \in U(\ve{r},\ve{r})} \inner{Q}{M}$, which zeroes the (no) transportation distance $d_M(\ve{r}, \ve{r})$.

\begin{proof}
To prove the Theorem, we need another version of the Gluing Lemma with entropic constraints \cite[Lemma 1]{cSD}, generalized to handle Tsallis entropy.
\begin{lemma}(Refined gluing Lemma)\label{lemglu}
Let $\ve{x}, \ve{y}, \ve{z}\in \bigtriangleup_n$. Let $P \in U(\ve{x}, \ve{y})$ and $Q\in U(\ve{y}, \ve{z})$. Let $S\in \mathbb{R}^{n\times n}$ defined by general term 
\begin{eqnarray}
s_{ik} & \defeq & \sum_j\frac{p_{ij}q_{jk}}{y_j}\:\:.
\end{eqnarray}
The following holds about $S$:
\begin{enumerate}
\item $S\in U(\ve{x},\ve{z})$;
\item if $q\geq 1$, then:
\begin{eqnarray}
\lefteqn{H_q(S) - H_q(\ve{x}) - H_q(\ve{z})}\nonumber\\
 & \geq & H_q(P) - H_q(\ve{x}) - H_q(\ve{y})\label{bp1}\:\:.
\end{eqnarray}
\end{enumerate}
\end{lemma}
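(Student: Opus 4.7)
The plan splits along the two claims. Item~1 is a routine marginal computation: $\sum_k s_{ik} = \sum_j (p_{ij}/y_j)\sum_k q_{jk} = \sum_j p_{ij} = x_i$, where I use that $Q\in U(\ve{y},\ve{z})$ gives $\sum_k q_{jk} = y_j$, and the symmetric computation on columns yields $\sum_i s_{ik} = z_k$, so $S\in U(\ve{x},\ve{z})$.

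For item~2, the natural move is to view $S$ as the $(X,Z)$-marginal of the three-way Markov joint $T_{ijk} := p_{ij}q_{jk}/y_j$, whose $(X,Y)$- and $(Y,Z)$-marginals are exactly $P$ and $Q$. The Tsallis chain rule $H_q(A,B) = H_q(B) + \sum_b \Pr(B=b)^q H_q(A\mid B=b)$, which follows directly from the paper's definition of $H_q$, converts the stated inequality (after cancelling $H_q(\ve{x})$ from both sides) into the Tsallis data-processing form $H_q(X\mid Z) \geq H_q(X\mid Y)$, where both conditional entropies are given by the escort-weighted expression just displayed.

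To establish this DPI I would use the Markov decomposition $\Pr(X=i\mid Z=k) = \sum_j (q_{jk}/z_k)\Pr(X=i\mid Y=j)$, which is a convex combination in $j$, and invoke concavity of $H_q$ (valid for all $q\geq 0$) to obtain $H_q(X\mid Z=k) \geq \sum_j (q_{jk}/z_k) H_q(X\mid Y=j)$. Multiplying by $z_k^q$ and summing over $k$ produces a lower bound $H_q(X\mid Z) \geq \sum_j H_q(X\mid Y=j)\cdot \sum_k z_k^{q-1} q_{jk}$, which I then compare against the target $H_q(X\mid Y) = \sum_j y_j^q H_q(X\mid Y=j)$.

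The main obstacle is that the two weightings $\sum_k z_k^{q-1} q_{jk}$ and $y_j^q$ are not directly comparable, so the plain concavity bound does not close on its own. I would try to close the gap by an additional Minkowski/power-mean argument on $T$, exploiting its rank-one factorization $T_{ijk} = (p_{ij}/y_j)\,q_{jk}$ in $(i,k)$ for each fixed $j$; a parallel approach I would attempt is to reduce the claim to nonnegativity of $\sum_k z_k^q + \sum_{ij} p_{ij}^q - \sum_j y_j^q - \sum_{ik} s_{ik}^q$, expand this quantity directly in the entries of $T$, and apply the multivariate power-mean inequality entry by entry, using the three marginal relations of $T$. This rearrangement step is the delicate part I expect to carry the proof.
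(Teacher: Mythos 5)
Your item~1 is exactly the paper's computation, and your reduction of item~2 to a Tsallis data-processing inequality is also the same first move the paper makes: writing down the three-way Markov coupling $t_{ijk}=p_{ij}q_{jk}/y_j$ and using the Tsallis chain rule $H_q(\X,\Y)=H_q(\Y)+H_q(\X|\Y)$ to cancel $H_q(\ve{x})$ and restate the claim as $H_q(\X|\Z)\geq H_q(\X|\Y)$. Up to that point you and the paper agree.

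The gap is in the step you yourself flag as "the delicate part": the data-processing inequality for the escort-weighted conditional entropy $H_q(\X|\Y)=\sum_j y_j^q H_q(\X|\Y=j)$ is never actually established. Your concavity bound gives $H_q(\X|\Z)\geq \sum_j w_j H_q(\X|\Y=j)$ with $w_j=\sum_k z_k^{q-1}q_{jk}$, and the needed termwise comparison $w_j\geq y_j^q$ is genuinely false: take $n=2$, $q=2$, $Q=\bigl(\begin{smallmatrix}1/2 & 1/4\\ 0 & 1/4\end{smallmatrix}\bigr)$, so $\ve{y}=(3/4,1/4)$, $\ve{z}=(1/2,1/2)$; then $w_1=3/8<9/16=y_1^2$. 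Choosing $P$ so that $\Pr(\X|\Y=2)$ is a point mass (hence $H_q(\X|\Y=2)=0$) while $\Pr(\X|\Y=1)$ is spread out makes the concavity lower bound strictly smaller than the target $H_q(\X|\Y)$, so no termwise power-mean or Minkowski patch of the weights can close the argument; any repair must exploit cancellation across $j$. The paper avoids this entirely by importing the machinery of Furuichi's Tsallis information theory (reference \cite{fIT}): it observes that the Markov coupling gives $I_q(\X;\Z|\Y)=0$, invokes the chain rule for Tsallis mutual information ([fIT, Theorem 4.3]) to get $I_q(\X;\Y)=I_q(\X;\Z)+I_q(\X;\Y|\Z)$, and then uses the nonnegativity of conditional Tsallis mutual information for $q\geq 1$ ([fIT, Theorem 3.4]) to conclude $I_q(\X;\Y)\geq I_q(\X;\Z)$, which is exactly ineq.~(\ref{bp1}). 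In short: your proposal correctly sets up the problem but leaves its only nontrivial content unproved, and the route you sketch for it cannot work as stated.
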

\begin{proof}
The proof essentially builds upon \cite[Lemma 1]{cSD}. We remark that $S$ can be built by
\begin{eqnarray}
s_{ik} & = & \sum_j t_{ijk}\:\:,
\end{eqnarray}
where $\forall i,j,k \in \{1, 2, ..., n\}$, we have
\begin{eqnarray}
t_{ijk} & \defeq & 
\frac{p_{ij}q_{jk}}{y_j} 
\end{eqnarray}
if $y_j\neq 0$ (and $t_{ijk}  = 0$ otherwise)

$S$ is a transportation matrix between $\ve{x}$ and $\ve{z}$. Indeed,
\begin{eqnarray*}
\sum_i\sum_j s_{ijk} & = & \sum_j\sum_i \frac{p_{ij}q_{jk}}{y_j} \\
 & = & \sum_j\frac{q_{jk}}{y_j}\sum_i p_{ij} \\
 & =& \sum_j\frac{q_{jk}}{y_j} y_j = \sum_j q_{jk} =z_k\:\:;\\
\sum_k\sum_j s_{ijk} & = & \sum_j\sum_k \frac{p_{ij}q_{jk}}{y_j}\\
 & = & \sum_j\frac{p_{ij}}{y_j}\sum_k q_{jk} \\
 & = & \sum_j\frac{p_{ij}}{y_j} y_j = \sum_j p_{ij} =x_i\:\:.
\end{eqnarray*}
So, $S \in U(\ve{x},\ve{z})$. To prove ineq. (\ref{bp1}), we need the following definition from \cite{fIT}.
\begin{definition}\cite{fIT}\label{defFTE}
Let $\X$ and $\Y$ denote random variables. The Tsallis conditional entropy of $\X$ given $\Y$, and Tsallis joint entropy of $\X$ and $\Y$, are respectively given by:
\begin{eqnarray*}
H_q(\X\vert \Y) & \defeq & -\sum_{x,y} p(x, y)^q \log_qp(x\vert y) \:\:,\\
H_q(\X,\Y) & \defeq & -\sum_{x,y} p(x,y)^q \log_qp(x,y)\:\:.
\end{eqnarray*}
The Tsallis mutual
entropy of $\X$ and $\Y$ is defined by
\begin{eqnarray*}
I_q(\X; \Y) & \defeq & H_q(\X) - H_q(\X|\Y) \\
 & = & H_q(\X) + H_q(\Y) - H_q(\X,\Y)\:\:.
\end{eqnarray*}
\end{definition}
We have made use of the simplifying notation that removes variables names when unambiguous, like $p(x)\defeq p(\X = x)$. Let $\X, \Y, \Z$ be random variables jointly distributed as $T$, that is, for any $x,y,z$,
\begin{eqnarray}
p(x,y,z) & = & \frac{p(x,y)p(y,z)}{p(y)}\label{defJD}
\end{eqnarray}
It follows from that and Bayes rule that:
\begin{eqnarray}
p(x|y) & = & \frac{p(x,y)}{p(y)}\nonumber\\
 & = &  \frac{p(x,y,z)}{p(y,z)} \:\:, \forall z\nonumber\\
 & = & p(x|y,z) \:\:, \forall z\:\:,
\end{eqnarray}
and so
\begin{eqnarray}
I_q(\X; \Z | \Y) & \defeq &  H_q(\X | \Y) - H_q(\X|\Y, \Z)\nonumber\\
 & = & 0\:\:.
\end{eqnarray}
It comes from \cite[Theorem 4.3]{fIT},
\begin{eqnarray}
I_q(\X; \Y, \Z) & = & I_q(\X; \Z) + I_q(\X; \Y | \Z)\\
 & = & I_q(\X; \Y) + I_q(\X; \Z | \Y)\:\:,
\end{eqnarray}
but since $I_q(\X; \Z | \Y) = 0$, we obtain
\begin{eqnarray}
I_q(\X; \Y) & = &  I_q(\X; \Z) + I_q(\X; \Y | \Z)\:\:.
\end{eqnarray}
It also follows from \cite[Theorem 3.4]{fIT} that $I_q(\X; \Y | \Z) \geq 0$ whenever $q\geq 1$, and so
\begin{eqnarray}
I_q(\X; \Y) & \geq &  I_q(\X; \Z) \:\:, \forall q\geq 1\:\:.\label{inHT}
\end{eqnarray}
Now, it comes from Definition \ref{defFTE} and the definition of $\X,\Y$ and $\Z$ from eq. (\ref{defJD}),
\begin{eqnarray}
- I_q(\X; \Y) & = & H_q(\X,\Y) - H_q(\X) - H_q(\Y)\nonumber\\
 & = & H_q(P) - H_q(\ve{x}) - H_q(\ve{y})\:\:,\\
- I_q(\X; \Z) & = & H_q(\X,\Z) - H_q(\X) - H_q(\Z)\nonumber\\
 & = & H_q(S) - H_q(\ve{x}) - H_q(\ve{z})\:\:.
\end{eqnarray}
Since $P\in U_\lambda(\ve{x},\ve{y})$, by assumption, we obtain from ineq. (\ref{inHT}) that whenever $q\geq 1$,
\begin{eqnarray}
H_q(S) - H_q(\ve{x}) - H_q(\ve{z}) & \geq & H_q(P) - H_q(\ve{x}) - H_q(\ve{y})\nonumber\:\:,
\end{eqnarray}
as claimed.
\end{proof}
We can now prove Theorem \ref{thdist2}. Shannon's entropy is denoted $H_1$ for short.

Define for short
\begin{eqnarray}
\Delta & \defeq & H_1(P) + H_1(Q) - H_1(S) - 2\beta\cdot H_1(\ve{y})\:\:,
\end{eqnarray}
where $P, Q, S$ are defined in Lemma \ref{lemglu}.
It follows from the definition of $S$ and \cite[Proof of Theorem 1]{cSD} that
\begin{eqnarray}
 \lefteqn{d^{\lambda,q,\beta}_M (\ve{x},\ve{z})}\nonumber\\
 &  \defeq & \min_{R \in  U(\ve{x},\ve{z})} \inner{R}{M} - \frac{1}{\lambda}\cdot \left(H_1(R) - \beta\cdot( H_1(\ve{x}) + H_1(\ve{z}))\right)\nonumber\\
 & \leq & \inner{S}{M} - \frac{1}{\lambda}\cdot \left( H_1(S)  - \beta\cdot(H_1(\ve{x}) + H_1(\ve{z}))\right)\nonumber\\
 & \leq & \inner{P}{M} + \inner{Q}{M} - \frac{1}{\lambda}\cdot \left(H_1(S)  - \beta\cdot(H_1(\ve{x}) + H_1(\ve{z}))\right) \nonumber\\
 & & = \inner{P}{M} - \frac{1}{\lambda}\cdot \left(H_1(P) - \beta\cdot(H_1(\ve{x}) + H_1(\ve{y}))\right) \nonumber\\
 & & + \inner{Q}{M} - \frac{1}{\lambda}\cdot \left(H_1(Q) - \beta\cdot(H_1(\ve{y}) + H_1(\ve{z}))\right) \nonumber\\
 & & + \frac{1}{\lambda}\cdot (H_1(P) + H_1(Q) - H_1(S) - 2\beta\cdot H_1(\ve{y}))\nonumber\\
 & \defeq & d^{\lambda,q,\beta}_M (\ve{x},\ve{y}) + d^{\lambda,q,\beta}_M (\ve{y},\ve{z}) + \frac{1}{\lambda}\cdot \Delta\:\:.\label{eqdelta1}
\end{eqnarray}
We now show that $\Delta\leq 0$. For this, observe that ineq. (\ref{bp1}) yields:
\begin{eqnarray}
\lefteqn{\Delta}\nonumber\\
 & \leq & (H_1(S) + H_1(\ve{y})-H_1(\ve{z})) \nonumber\\
 & & + H_1(Q) - H_1(S) - 2\beta\cdot H_1(\ve{y}) \nonumber\\
 & & = H_1(Q) - H_1(\ve{y}) - H_1(\ve{z}) + 2 (1-\beta) H_1(\ve{y}) \:\:,\label{bp2}
\end{eqnarray}
and, by definition of $Q, \ve{y}, \ve{z}$,
\begin{eqnarray}
\lefteqn{H_1(Q) - H_1(\ve{y}) - H_1(\ve{z})}\nonumber\\
 & \defeq & H_1(\Y, \Z) - H_1(\Y) - H_1(\Z)\:\:.\label{eq00}
\end{eqnarray}
Shannon's entropy of a joint distribution is maximal with independence: $H_1(\Y, \Z) \leq H_1(\Y\times \Z) = H_1(\Y) + H_1(\Z)$,
so we get from eq. (\ref{bp2}) after simplifying
\begin{eqnarray}
\Delta & \leq & 2 (1-\beta) H_1(\ve{y}) \label{leq1}\:\:.
\end{eqnarray}
Hence if $\beta \geq 1$, then $\Delta \leq 0$. We get that for any $\beta \geq 1$, 
\begin{eqnarray}
d^{\lambda,1,\beta}_M (\ve{x},\ve{z}) & \leq & d^{\lambda,1,\beta}_M (\ve{x},\ve{y}) + d^{\lambda,1,\beta}_M (\ve{y},\ve{z})\:\:,
\end{eqnarray}
and $d^{\lambda,1,\beta}_M$ satisfies the triangle inequality. For $\beta = 1/2$, it is trivial to check that for any $\ve{x} \in \bigtriangleup_n$, the (no) transportation plan $P = \mathrm{Diag}(\ve{x})$ is in $U(\ve{x},\ve{x})$ and satisfies
\begin{eqnarray}
\lefteqn{\inner{P}{M} - \frac{1}{\lambda}\cdot \left(H_q(P) - \frac{1}{2}\cdot( H_q(\ve{x}) + H_q(\ve{x}))\right)}\nonumber\\
 & = & 0 - \frac{1}{\lambda}\cdot \left(H_q(\ve{x}) - H_q(\ve{x}) \right) = 0\:\:.
\end{eqnarray}
This ends the proof of Theorem \ref{thdist2}.
\end{proof}
Notice that Theorem \ref{thdist3} is in fact a direct consequence of Theorem \ref{thdist2}. To finish up, we now prove Theorem \ref{thAxiom}. To simplify notations, let
\begin{eqnarray}
U_\alpha(\ve{r}, \ve{c}) & \defeq & \left\{ 
P\in U(\ve{r},\ve{c}): 
H_q(P) - H_q(\ve{r}) - H_q(\ve{c}) \geq \alpha(\lambda) \right\}\:\:.\label{defUalpha}
\end{eqnarray}
Suppose $P, Q$ in Lemma \ref{lemglu} are such that $P, Q \in U_\lambda(\ve{x}, \ve{y})$. In this case, 
\begin{eqnarray}
H_q(P) - H_q(\ve{x}) - H_q(\ve{y}) & \geq & \alpha
\end{eqnarray}
and so point 2. in Lemma \ref{lemglu} brings
\begin{eqnarray}
H_q(S) - H_q(\ve{x}) - H_q(\ve{z}) & \geq & \alpha\:\:,
\end{eqnarray}
so $S \in U_\lambda(\ve{x}, \ve{z})$. The proof of \cite[Theorem 1]{cSD} can then be used to show that $\forall \ve{x}, \ve{y}, \ve{z} \in \bigtriangleup_n$,
\begin{eqnarray}
d_{M, \alpha, q}(\ve{x},\ve{z}) & \leq & d_{M, \alpha, q}(\ve{x},\ve{y})+d_{M, \alpha, q}(\ve{y},\ve{z})\:\:.\label{defdist22}
\end{eqnarray}
It is easy to check that $d_{M, \alpha, q}$ is non negative and that $\mathbbm{1}_{\lbrace \ve{r}=\ve{c}\rbrace} d_{M, \alpha, q}(\ve{r},\ve{c})$ meets, in addition, the identity of the indiscernibles. This achieves the proof of Theorem \ref{thAxiom}.

\section{Proof of Theorem \ref{thSortrot}}\label{proof_thSortrot}

\noindent \textbf{Basic facts and definitions} --- In this proof, we make two simplifying assumptions: (i) we consider matrices either as matrices or as vectorized matrices without ambiguity, and (ii) we let $\phi(P) \defeq -H_q(P)$, noting that the domain of $\phi$ is $\bigtriangleup_{n^2}$ (nonnegative matrices with row- and column-sums in the simplex) when $P \in U(\ve{r},\ve{c})$. Since $\phi$ is convex, we can define a \textit{Bregman divergence} with generator $D_\phi$ \cite{bTR} as:
\begin{eqnarray}
D_\phi(P\|R) & \defeq & \phi(P) - \phi(R) - \inner{\nabla\phi(R)}{P-R}\:\:.\nonumber
\end{eqnarray}
We define 
\begin{eqnarray}
a_{ij} & \defeq & \alpha_i + \lambda m_{ij} + \beta_j\:\:,\label{defaij}
\end{eqnarray}
so that 
\begin{eqnarray}
p_{ij}&  = & \exp_q(-1)\exp_q^{-1}(a_{ij})
\end{eqnarray} 
in eq. (\ref{eq:KKT}). 
Finally, let us denote for short 
\begin{eqnarray}
D_q(P\|R) & \defeq & K_{1/q}(P^q, R^q)\:\:,\label{defDK}
\end{eqnarray}
so that we can, reformulate eq. (\ref{tsent}) as:
\begin{eqnarray}
d^{\lambda,q}_M (\ve{r},\ve{c}) & = & \frac{1}{\lambda}\cdot \min_{P \in U(\ve{r},\ve{c})} D_q(P \| \tilde{U}) + g(M)\:\:,\label{tsent22}
\end{eqnarray}
and our objective "reduces" to the minimization of $D_q(P \| \tilde{U})$ over $U(\ve{r},\ve{c})$. In \sotrot~(Algorithm \ref{alg:SecondOrderSinkhorn}), we just care for a single constraint out of the two possible in $U(\ve{r},\ve{c})$, so we will focus without loss of generality on the row constraint and therefore to the solution of:
\begin{eqnarray}
P^\star & \defeq & \arg \min_{P \in \mathbb{R}_+^{n\times n} : P\ve{1} = \ve{r}} D_q(P \| \tilde{U}) \:\:.\label{tsent22p}
\end{eqnarray}
The same result would apply to the column constraint.

\noindent \textbf{Convergence proof} --- We reuse the theory of \textit{auxiliary functions} developed for the iterative constrained minimization of Bregman divergences \cite{bTR,ddlIF}. We reuse notation "$\diamond$" following \cite{cssLRj,nnOT} and define for any $\ve{y} \in \mathbb{R}^{n}, P\in {\mathbb{R}}^{n\times n}$ matrix $\ve{y} \diamond_q P \in \mathbb{R}^{n\times n}$ such that
\begin{eqnarray}
\lefteqn{(\ve{y} \diamond_q P)_{ij}}\nonumber\\
 & \defeq & \frac{\exp_q^{-1}(y_i)p_{ij}}{\exp_q\left[(1-q)y_i\exp_q^{1-q}(y_i)\log_q(p_{ij})\right]}\label{defdiam}\:\:.
\end{eqnarray}
We also define key matrix $\tilde{P}\in \mathbb{R}^{n\times n}$ with:
\begin{eqnarray}
\tilde{P} & \defeq & \ve{r}\ve{c}^\top\label{defPtilde}\:\:.
\end{eqnarray} 
Let us denote 
\begin{eqnarray*}
\mathcal{Q} & \defeq & \left\{Q \in \mathbb{R}^{n\times n} : 
Q = \exp_q(-1) \exp_q^{-1}(\ve{\alpha}^\top\ve{1} + \lambda M + \ve{1}^\top \ve{\beta})\right\}\:\:.\\
\mathcal{P} & \defeq & \{P\in \bigtriangleup_{n^2} : P\ve{1} = \tilde{P}\ve{1} = \ve{r}\}\:\:.
\end{eqnarray*}
One function will be key.
\begin{definition}\label{defaux}
We define $A(P,\ve{y}) \defeq \sum_i A_i(P,\ve{y})$, with:
\begin{eqnarray}
\lefteqn{A_i(P,\ve{y})}\nonumber\\
 & \defeq & y_i r_i + \sum_j (p^q_{ij} - \exp_q^q(-1)\exp_q^{-q}(a_{ij} - y_i))\label{af2}\:\:.
\end{eqnarray}
Here $a_{ij}$ is defined in eq. (\ref{defaij}), $r_i$ is the $i$-th coordinate in $\ve{r}$ (the row marginal constraint), and $\ve{y}\in \mathbb{R}^n$.
\end{definition} 
\begin{lemma}\label{lemaux}
For any $\ve{y}$, 
\begin{eqnarray}
A(P,\ve{y}) & = & D_\phi (\tilde{P} \| P) - D_\phi (\tilde{P} \| \ve{y}\diamond_q P)\:\:.
\end{eqnarray}
Furthermore, $A(P,\ve{0}) = 0$.
\end{lemma}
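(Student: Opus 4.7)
The plan is to prove the identity by expanding both Bregman divergences in closed form, using the explicit shape of $\phi$ and the special structure of the operation $\diamond_q$, and to get $A(P,\ve{0})=0$ essentially for free from the KKT parameterization of $P$.

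First, for the easy part: since $a_{ij}$ is defined in (\ref{defaij}) by $p_{ij} = \exp_q(-1)\exp_q^{-1}(a_{ij})$, raising to the $q$-th power yields $p_{ij}^q = \exp_q^q(-1)\exp_q^{-q}(a_{ij})$. Plugging $\ve{y}=\ve{0}$ into (\ref{af2}) therefore collapses the sum: $A_i(P,\ve{0}) = 0\cdot r_i + \sum_j (p_{ij}^q - p_{ij}^q) = 0$, and summing over $i$ gives $A(P,\ve{0})=0$.

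For the main identity, denote $P' \defeq \ve{y}\diamond_q P$ and expand
\begin{eqnarray*}
D_\phi(\tilde P\|P) - D_\phi(\tilde P\|P') & = & \phi(P') - \phi(P) + \langle \nabla\phi(P'), \tilde P - P'\rangle - \langle\nabla\phi(P), \tilde P - P\rangle.
\end{eqnarray*}
With $\phi(P) = (q-1)^{-1}\sum_{ij}(p_{ij}^q - p_{ij})$ we have $\nabla_{ij}\phi(P) = (qp_{ij}^{q-1}-1)/(q-1)$, and a direct calculation gives the Fenchel-type identity $\langle \nabla\phi(P),P\rangle - \phi(P) = \sum_{ij} p_{ij}^q$. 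After grouping the self-terms at $P$ and $P'$ via this identity, the expression rearranges to
\begin{eqnarray*}
D_\phi(\tilde P\|P) - D_\phi(\tilde P\|P') & = & \sum_{i,j}\bigl(p_{ij}^q - p_{ij}^{\prime q}\bigr) \;-\; \langle \nabla\phi(P) - \nabla\phi(P'),\tilde P\rangle.
\end{eqnarray*}

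The crux is then to show that $\nabla_{ij}\phi(P) - \nabla_{ij}\phi(P') = -y_i$, i.e.\ the row-index but not the column-index. The KKT form of $P$ gives the clean formula $p_{ij}^{q-1} = (1+(1-q)a_{ij})/q$, so everything reduces to verifying the structural property that the operation $\diamond_q$ implements a shift of the natural parameter $a_{ij}\mapsto a_{ij}-y_i$; that is, $(\ve{y}\diamond_q P)_{ij} = \exp_q(-1)\exp_q^{-1}(a_{ij}-y_i)$. This is the design principle behind the formula (\ref{defdiam}): the denominator $\exp_q[(1-q)y_i\exp_q^{1-q}(y_i)\log_q(p_{ij})]$ is chosen precisely so that the non-additive $q$-logarithm product rule $\log_q(xy) = \log_q x + \log_q y + (1-q)\log_q x\log_q y$ composes cleanly with the leading $\exp_q^{-1}(y_i)$ factor to yield the desired shift. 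Granted this, $p_{ij}^{\prime q-1} = p_{ij}^{q-1} - (1-q)y_i/q$, whence $\nabla_{ij}\phi(P) - \nabla_{ij}\phi(P') = q(p_{ij}^{q-1} - p_{ij}^{\prime q-1})/(q-1) = -y_i$.

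Finishing up is straightforward: row-constancy plus the marginal constraint $\sum_j \tilde p_{ij} = r_i$ give $\langle \nabla\phi(P) - \nabla\phi(P'),\tilde P\rangle = -\sum_i y_i r_i$, and $p_{ij}^{\prime q} = \exp_q^q(-1)\exp_q^{-q}(a_{ij}-y_i)$ by the shift property, so the right-hand side is exactly $\sum_i A_i(P,\ve{y}) = A(P,\ve{y})$. The main obstacle is the structural shift identity for $\diamond_q$; once that piece of $q$-calculus is in place, the rest is a mechanical Bregman-divergence expansion exploiting the row marginal constraint built into $\tilde P = \ve{r}\ve{c}^\top$.
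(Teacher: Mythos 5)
Your proof is correct and follows essentially the same route as the paper's, which is in fact far terser: the paper's entire proof of Lemma \ref{lemaux} is the three-point Bregman identity $D_\phi(\tilde P\|P)-D_\phi(\tilde P\|\ve{y}\diamond_q P)=-D_\phi(P\|\ve{y}\diamond_q P)+\inner{\tilde P-P}{\nabla\phi(\ve{y}\diamond_q P)-\nabla\phi(P)}$, with the remaining computation left to the reader. Your expansion via the Fenchel-type identity $\inner{\nabla\phi(P)}{P}-\phi(P)=\sum_{ij}p_{ij}^q$ is a clean way to close that computation, and your key observation that $\nabla_{ij}\phi(P)=-a_{ij}$ (since $qp_{ij}^{q-1}=1+(1-q)a_{ij}$) immediately gives the row-constant difference $-y_i$ and hence the $\sum_i y_i r_i$ term via $\tilde P\ve{1}=\ve{r}$; the $A(P,\ve{0})=0$ part is also right. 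The one soft spot is the step you yourself flag as the crux: you assert, rather than verify, that $(\ve{y}\diamond_q P)_{ij}=\exp_q(-1)\exp_q^{-1}(a_{ij}-y_i)$, appealing to the $q$-logarithm product rule as a ``design principle.'' That this is the intended semantics is unambiguous from the paper (Steps 13--14 of \sotrot~update $A\gets A-\ve{y}\ve{1}^\top$ then re-exponentiate, and Definition \ref{defaux} contains exactly $\exp_q^{-q}(a_{ij}-y_i)$), but deriving it from the literal formula (\ref{defdiam}) requires a careful $q$-calculus computation that does not obviously go through as printed, so a complete writeup should either carry out that verification or take the shift $a_{ij}\mapsto a_{ij}-y_i$ as the definition of the update, as the algorithm effectively does. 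Everything downstream of that identity in your argument is correct.
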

\begin{proof}
We have
\begin{eqnarray}
\lefteqn{D_\phi(\tilde{P} \| P) - D_\phi(\tilde{P} \| \ve{y}\diamond_q P)}\nonumber\\
 & = & -D_\phi(P \| \ve{y}\diamond_q P) \nonumber\\
 & & + \inner{\tilde{P}-P}{\nabla\phi(\ve{y}\diamond_q P) - \nabla\phi(P)}\nonumber\:\:.
\end{eqnarray}
\end{proof}
Because a Bregman divergence is non-negative and $A(P,\ve{0}) = 0$, if, as long as there exists some $\ve{y}$ for which $A(P,\ve{y}) > 0$ we keep on updating $P$ by replacing it by $\ve{y}^*\diamond_q P$ such that $A(P,\ve{y}^*) > 0$, then the sequence
\begin{eqnarray}
P_0 = \tilde{U} \rightarrow P_1 \defeq \ve{y}^*_0\diamond_q P_0 \rightarrow P_2 \defeq \ve{y}^*_1\diamond_q P_1 \cdots
\end{eqnarray}
will converge to a limit matrix in the sequence,
\begin{eqnarray}
\lim_j P_{j} \defeq \ve{y}^*_{j-1}\diamond_q P_{j-1}\:\:.
\end{eqnarray}
This matrix turns out to be the one we seek.
\begin{theorem}\label{thconv}
Let $P_{j+1} \defeq \ve{y}_{j}\diamond_q P_{j}$ (with $P_0 \defeq \tilde{U}$) be such that $A(P_j,\ve{y}_j) > 0, \forall j\geq 0$, and the sequence ends when no such $\ve{y}_j$ exists. Then $\mathcal{S} \defeq \{P_j\}_{j\geq 0} \subset \bar{\mathcal{Q}}$. If furthermore $\mathcal{S}$ lies in a compact of $\bar{\mathcal{Q}}$, then it satisfies
\begin{eqnarray}
P^\star \defeq \lim_j P_j & = & \arg\min_{P \in \mathcal{P}} D_q(P \| \tilde{U})\:\:.\label{defPINF}
\end{eqnarray}
\end{theorem}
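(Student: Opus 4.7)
The plan is to follow the auxiliary-function framework of Della Pietra et al.\ \cite{ddlIF} (extended outside the native Bregman setting as in \cite{cssLRj}): view $A(P,\ve{y})$ from Definition \ref{defaux} as a progress certificate that simultaneously ensures (a) each multiplicative update preserves membership in the dual parametric family, (b) a well-chosen Bregman divergence to the fixed reference $\tilde{P}=\ve{r}\ve{c}^\top\in\mathcal{P}$ decreases monotonically, and (c) every cluster point of the trajectory is the unique minimizer, via a Bregman Pythagorean identity. A useful preliminary is that $K_{1/q}(P^q,R^q)$, appearing in (\ref{tsent22p}), coincides with the Bregman divergence $D_\phi$ generated by $\phi=-H_q$ on $\mathbb{R}_+^{n\times n}$, so the two viewpoints can be used interchangeably.

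First I would check by unfolding (\ref{defdiam}) that whenever $p_{ij}=\exp_q(-1)\exp_q^{-1}(a_{ij})$ one has $(\ve{y}\diamond_q P)_{ij}=\exp_q(-1)\exp_q^{-1}(a_{ij}-y_i)$; by induction from $P_0=\tilde{U}\in\mathcal{Q}$, this gives $\{P_j\}\subset\mathcal{Q}$ (with only the row-dual $\ve{\alpha}$ moving). Second, rewriting Lemma \ref{lemaux} as $A(P_j,\ve{y}_j)=D_\phi(\tilde{P}\|P_j)-D_\phi(\tilde{P}\|P_{j+1})$, the hypothesis $A(P_j,\ve{y}_j)>0$ combined with $D_\phi\geq 0$ makes $D_\phi(\tilde{P}\|P_j)$ a convergent decreasing sequence, and telescoping forces $A(P_j,\ve{y}_j)\to 0$; the compactness hypothesis then furnishes a cluster point $P^\star\in\bar{\mathcal{Q}}$. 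Third, using strict concavity of $\ve{y}\mapsto A(P^\star,\ve{y})$ (its Hessian is diagonal with strictly negative entries, a short $\exp_q$ computation) together with continuity of the step-selection rule built into \sotrot{}, I would upgrade $A(P_j,\ve{y}_j)\to 0$ to $\max_{\ve{y}}A(P^\star,\ve{y})=0$; since $A(P^\star,\ve{0})=0$, $\ve{y}=\ve{0}$ must then be a global maximizer, and the first-order condition
\[
0=\tfrac{\partial A}{\partial y_i}(P^\star,\ve{0}) = r_i - q\exp_q^q(-1)\sum_j \exp_q^{-1}(a^\star_{ij}) = r_i - \sum_j p^\star_{ij}
\]
holds, after using the one-line identity $q\exp_q^{q-1}(-1)=1$ (immediate from $\exp_q(-1)=q^{1/(1-q)}$). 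Hence $P^\star\in\mathcal{P}\cap\bar{\mathcal{Q}}$. Fourth, the Bregman Pythagorean theorem identifies $P^\star$ as the minimizer: a direct computation gives $\nabla\phi(P^\star)_{ij}-\nabla\phi(\tilde{U})_{ij}=-\alpha^\star_i$, whose inner product against any $P-P^\star$ with $P\in\mathcal{P}$ vanishes since $(P-P^\star)\ve{1}=\ve{0}$, yielding $D_\phi(P\|\tilde{U})=D_\phi(P\|P^\star)+D_\phi(P^\star\|\tilde{U})$ for every $P\in\mathcal{P}$; so $P^\star$ uniquely attains the minimum in (\ref{defPINF}), and uniqueness promotes cluster-point convergence to full sequential convergence of $\{P_j\}$.

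The main obstacle, and the one place where the algorithmic specifics matter rather than a generic descent argument, is the passage ``$A(P_j,\ve{y}_j)\to 0 \Rightarrow \max_{\ve{y}}A(P^\star,\ve{y})=0$''. Naive compactness-plus-continuity is not quite enough, since the selected $\ve{y}_j$ could in principle be poor ascent directions at every step. The quadratic-in-$\ve{y}$ surrogate built from $P_1,P_2$ in Steps 4--14 of Algorithm \ref{alg:SecondOrderSinkhorn} --- essentially a second-order Taylor expansion of $A(P_j,\cdot)$ at $\ve{0}$ --- together with the safeguarding box in Step 12, must therefore be shown to realize a definite fraction of $\max_{\ve{y}}A(P_j,\ve{y})$, which is the quantitative progress bound needed to rule out spurious limits. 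The remaining pieces then reduce to routine algebraic identities for $\exp_q$ and $\log_q$ and standard Bregman-geometry facts.
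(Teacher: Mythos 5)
Your proof follows the same skeleton as the paper's: the telescoping identity of Lemma \ref{lemaux} turning $A(P_j,\ve{y}_j)>0$ into monotone decrease of $D_\phi(\tilde{P}\|P_j)$, closure of $\bar{\mathcal{Q}}$ under the $\diamond_q$ update, the first-order condition $\nabla_{\ve{y}}A(P^\star,\ve{0})=\ve{r}-P^\star\ve{1}$ forcing the limit into $\mathcal{P}$, and a Bregman Pythagorean identity to identify the limit as the minimizer of (\ref{defPINF}). Two points of divergence are worth recording. First, where the paper invokes the four-property theorem adapted from \cite{ddlIF} wholesale, you verify the Pythagorean identity directly from $\nabla\phi(P^\star)_{ij}-\nabla\phi(\tilde{U})_{ij}=-\alpha_i^\star$ together with $(P-P^\star)\ve{1}=\ve{0}$ for $P\in\mathcal{P}$; this is more self-contained and is correct. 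Second, you assert $K_{1/q}(P^q,R^q)=D_\phi(P\|R)$ outright and thereby skip the paper's final ``transfer'' step, whereas the paper claims in eq. (\ref{eqbK}) that the two quantities differ by $\sum_{i,j}(p_{ij}^q-r_{ij}^q)$ and uses the Bregman triangle equality to carry the Pythagorean identity from $D_\phi$ over to $D_q$. A direct computation (e.g. $q=2$, where both quantities reduce to $\sum_{i,j}(p_{ij}-r_{ij})^2$) supports your identification, but you should either prove it in general or, more robustly, include the triangle-equality transfer, which yields the conclusion whether or not the correction term vanishes. Finally, the obstacle you flag at the end --- that $A(P_j,\ve{y}_j)\rightarrow 0$ only forces $\sup_{\ve{y}}A(P^\star,\ve{y})=0$ if the selected steps capture a definite fraction of the attainable increase --- is genuine, and it is precisely the content the paper defers to Theorem \ref{thAUX2} on the specific updates of \sotrot; the paper's own sketch of Theorem \ref{thconv} is no more rigorous on this point than yours, so identifying it explicitly is appropriate rather than a defect of your argument relative to the paper's.
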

\begin{myproofsketch}
The proof relies on two steps, first that
\begin{eqnarray}
P^\star \defeq \lim_j P_j & = & \arg\min_{P \in \mathcal{P}} D_\phi(P \| \tilde{U})\:\:,
\end{eqnarray}
and then the fact that (\ref{defPINF}) holds as well, which "amounts" to replacing $D_\phi$, which is Bregman, by $D_q$, which is \textit{not}. Because it is standard in Bregman divergences, we sketch the first step. The fundamental result we use is adapted from \cite{ddlIF} (see also \cite[Theorem 1]{cssLRj}).
\begin{theorem}
Suppose that $D_\phi(\tilde{P},\tilde{U}) < \infty$. Then there exists a unique $P^\star$ satisfying the following four properties:
\begin{enumerate}
\item $P^\star \in \mathcal{P}\cap\bar{\mathcal{Q}}$
\item $\forall P\in\mathcal{P},\forall R\in\bar{\mathcal{Q}}, D_\phi(P\|R) = D_\phi(P\|P^\star) + D_\phi(P^\star\|R)$
\item $P^\star = \underset{P \in \mathcal{P}}{\arg\min} 	D_\phi(P\|\tilde{U})$
\item $P^\star = \underset{R \in \bar{\mathcal{Q}}}{\arg\min} D_\phi(\tilde{P}\|R)$
\end{enumerate}
Moreover, any of these four properties determines $P^\star$ uniquely.
\end{theorem}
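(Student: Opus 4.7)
The plan is to recognise this as the classical Csisz\'ar--Della Pietra--Lafferty projection theorem \cite{ddlIF,cssLRj} adapted to the Tsallis generator $\phi=-H_q$, and to prove the four properties in the order $3\to 1\to 2\to 4$, with uniqueness falling out of each step. First, I would define the candidate $P^\star \defeq \arg\min_{P\in\mathcal{P}} D_\phi(P\|\tilde{U})$. Existence is immediate since $\mathcal{P}$ is a closed convex subset of the compact simplex, $D_\phi(\cdot\|\tilde{U})$ is lower-semicontinuous on it, and the hypothesis $D_\phi(\tilde{P}\|\tilde{U})<\infty$ (with $\tilde{P}\in\mathcal{P}$) makes the infimum finite; uniqueness is strict convexity of $\phi$. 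This settles property~3. Writing the Lagrangian for the marginal constraints defining $\mathcal{P}$ yields the stationarity condition $\nabla\phi(P^\star)=\nabla\phi(\tilde{U})+\ve{\alpha}\ve{1}^\top+\ve{1}\ve{\beta}^\top$ for some multipliers $\ve{\alpha},\ve{\beta}$. Since $\nabla\phi$ is, up to an affine transformation, $-\log_q$, inverting this identity produces exactly the $\exp_q(-1)\exp_q^{-1}(\alpha_i+\lambda m_{ij}+\beta_j)$ form of the elements of $\bar{\mathcal{Q}}$; combined with property~3 this gives property~1.

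The heart of the proof is property~2, the generalised Pythagorean identity. I would invoke the standard three-point identity for Bregman divergences, valid for any $P,P^\star,R$ in the effective domain of $\phi$:
\begin{eqnarray*}
D_\phi(P\|R)-D_\phi(P\|P^\star)-D_\phi(P^\star\|R) & = & \inner{P-P^\star}{\nabla\phi(P^\star)-\nabla\phi(R)}.
\end{eqnarray*}
The claim is that the right-hand side vanishes for every $P\in\mathcal{P}$ and $R\in\bar{\mathcal{Q}}$. By Step~2 and the definition of $\bar{\mathcal{Q}}$, both $\nabla\phi(P^\star)$ and $\nabla\phi(R)$ differ from $\nabla\phi(\tilde{U})$ by a matrix of the form $\ve{\alpha}\ve{1}^\top+\ve{1}\ve{\beta}^\top$, so their difference lies in the same rank-$\leq 2$ linear span. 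On the other hand, $P$ and $P^\star$ both belong to $\mathcal{P}$, hence share row- (and column-) marginals, so $P-P^\star$ has vanishing row- and column-sums; its Frobenius inner product with any matrix of the form $\ve{\alpha}\ve{1}^\top+\ve{1}\ve{\beta}^\top$ is identically zero. This orthogonality collapses the three-point identity to the Pythagorean equality.

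Property~4 and global uniqueness follow cleanly. Taking $P=\tilde{P}\in\mathcal{P}$ in the Pythagorean identity gives $D_\phi(\tilde{P}\|R)=D_\phi(\tilde{P}\|P^\star)+D_\phi(P^\star\|R)\geq D_\phi(\tilde{P}\|P^\star)$ for every $R\in\bar{\mathcal{Q}}$, with equality iff $R=P^\star$ by non-negativity and strict convexity of $D_\phi$; this is property~4. Each of the four properties then pins down the same unique matrix: properties~3 and~4 by strict convexity of the corresponding objectives over convex domains, property~1 because any element of $\mathcal{P}\cap\bar{\mathcal{Q}}$ inherits the KKT characterisation of Step~2 and must coincide with $P^\star$, and property~2 by specialising to $R=P^\star$. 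The main obstacle I anticipate is the boundary behaviour: $\nabla\phi=-\log_q$ diverges as $p_{ij}\to 0$, so the KKT derivation and the Bregman identity must be restricted to matrices sharing a common support with $\tilde{U}$. The hypothesis $D_\phi(\tilde{P}\|\tilde{U})<\infty$ is exactly what controls this, and passing to the closure $\bar{\mathcal{Q}}$ accommodates sequences in $\mathcal{Q}$ that touch the boundary of the simplex.
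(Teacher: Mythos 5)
The paper never proves this statement: it is imported wholesale (``adapted from \cite{ddlIF}, see also \cite[Theorem 1]{cssLRj}'') as a known Bregman-projection duality result, so there is no in-paper argument to compare yours against. Your reconstruction follows the standard Csisz\'ar/Della Pietra--Della Pietra--Lafferty route, and the skeleton --- define $P^\star$ as the $\mathcal{P}$-projection of $\tilde{U}$, obtain the exponential-family form from stationarity, collapse the three-point identity by orthogonality, then read off property 4 by taking $P=\tilde{P}$ --- is the right one. Your observation that $\nabla\phi$ is affine in the $\exp_q$-parameter $a_{ij}$, so that $\mathcal{Q}$ really is the level set $\nabla\phi^{-1}(\nabla\phi(\tilde{U})+\mathrm{shifts})$, is correct and worth making explicit, since for Tsallis generators this is not automatic (it is precisely why $D_q\neq D_\phi$ elsewhere in this proof).

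Two concrete gaps remain. First, the orthogonality step is where the pairing between $\mathcal{P}$ and $\mathcal{Q}$ must match, and your parenthetical ``(and column-) marginals'' does illegitimate work: the paper's $\mathcal{P}$ fixes only the row marginal ($P\ve{1}=\ve{r}$), so $P-P^\star$ has vanishing row sums but not column sums, and its Frobenius product with a column shift $\ve{1}\ve{\beta}^\top$ need not vanish. With $\mathcal{Q}$ as literally written (containing both $\ve{\alpha}\ve{1}^\top$ and $\ve{1}\ve{\beta}^\top$ shifts) property 2 would actually fail; the theorem is true only if $\mathcal{Q}$ is restricted to row shifts, which is what \sotrot~generates anyway ($A\gets A-\ve{y}\ve{1}^\top$). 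This traces back to the paper's own definitions, but your proof should either fix $\ve{\beta}=\ve{0}$ or impose both marginals in $\mathcal{P}$; as written it silently assumes the latter while the paper assumes the former. Second, the genuinely hard content of the cited theorem is the passage to the closure: showing $P^\star\in\mathcal{P}\cap\bar{\mathcal{Q}}$ (KKT gives the $\exp_q$ form only where $P^\star>0$, and inequality multipliers appear at zero entries) and showing the Pythagorean identity survives limits $R_k\to R\in\bar{\mathcal{Q}}\setminus\mathcal{Q}$, where $\nabla\phi$ blows up entrywise for $q\in(0,1)$ and $D_\phi(P\Vert\cdot)$ is not continuous. You flag this obstacle but do not resolve it, so the argument is complete only for interior $R$ and a full-support minimizer. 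Since the paper leans entirely on \cite{ddlIF} for exactly these points, your sketch is a useful supplement to the citation rather than a self-contained replacement.
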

It is not hard to check that $\tilde{U}\in \bar{\mathcal{Q}}$ and whenever $P_j\in \bar{\mathcal{Q}}$, then $\ve{y} \diamond_q P_{j} \in \bar{\mathcal{Q}}, \forall \ve{y}$, so we indeed have $\mathcal{S} \subset \bar{\mathcal{Q}}$. With the constraint that $A(P_j,\ve{y}_j) > 0, \forall j\geq 0$, it follows from Lemma \ref{lemaux} that $A(P,\ve{y})$ is an auxiliary function for $\mathcal{S}$ \cite{cssLRj} \textit{if} we can show in addition that if $\ve{y} = \ve{0}$ is a maximum of $A(P,\ve{y})$, then $P\in \mathcal{P}$. To remark that this is true, we have 
\begin{eqnarray}
\nabla A(P,\ve{y})_{\ve{y}} & = & \ve{r} - P\ve{1}\:\:,
\end{eqnarray}
so whenever $A(P,\ve{y})$ reaches a maximum in $\ve{y}$, we indeed have $P\ve{1} = \ve{r}$ and so $P\in \mathcal{P}$, and if $\ve{y} = \ve{0}$ then because a Bregman divergence satisfies the identity of the indiscernibles, if $\ve{y} = \ve{0}$ is the maximum, then $\mathcal{S}$ has converged to some $P^\star$. From 4. above, we get
\begin{eqnarray}
P^\star & = & \underset{R \in \bar{\mathcal{Q}}}{\arg\min} D_\phi(\tilde{P}\|R)\:\:,
\end{eqnarray}
and so from 3. above, we also get
\begin{eqnarray}
P^\star & = & \underset{P \in \mathcal{P}}{\arg\min} 	D_\phi(P\|\tilde{U}) \:\:.
\end{eqnarray}
To "transfer" this result to $D_q$, we just need to remark that there is one remarkable trivial equality:
\begin{eqnarray}
D_\phi(P\|R) & = & D_q(P\|R)  - \sum_{i,j}(p^q_{ij} - r^q_{ij})\label{eqbK}\:\:,
\end{eqnarray}
so that even when $K_{1/q}$ is \textit{not} a Bregman divergence for a general $q$, it still meets the \textit{Bregman triangle equality} \cite{aIG}.
\begin{lemma}
We have;
\begin{eqnarray}
\lefteqn{D_q(P\|R) + D_q(R\|S) - D_q(P\|S)}\nonumber\\
 &= & D_\phi(P\|R) + D_\phi(R\|S) - D_\phi(P\|S)\nonumber\\
&=& \inner{P-R}{\nabla\phi(S) - \nabla\phi(R)}\label{eq:breg_triangle}\:\:.
\end{eqnarray}
\end{lemma}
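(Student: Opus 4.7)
The plan is to prove the lemma in two independent stages: first show that the three-term combinations involving $D_q$ and $D_\phi$ coincide, then establish the standard Bregman triangle equality for $D_\phi$. The first stage leans entirely on the single-entry relation (\ref{eqbK}) stated just above the lemma, while the second stage is the classical computation that holds for any Bregman divergence and only uses the definition of $D_\phi$ in terms of $\phi$ and $\nabla\phi$.

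For the first equality, I would introduce the shorthand $g(X)\defeq \sum_{i,j} x_{ij}^q$ so that equation (\ref{eqbK}) reads $D_\phi(X\|Y) = D_q(X\|Y) - g(X) + g(Y)$. Plugging this into each of $D_\phi(P\|R)$, $D_\phi(R\|S)$ and $D_\phi(P\|S)$ and collecting the $g$-terms, I would observe that they telescope: the pair $(-g(P),+g(P))$ from the $P\|R$ and $P\|S$ terms cancels, the pair $(+g(R),-g(R))$ from the $P\|R$ and $R\|S$ terms cancels, and similarly for $g(S)$. What remains is exactly $D_q(P\|R)+D_q(R\|S)-D_q(P\|S)$, giving the first equality in (\ref{eq:breg_triangle}).

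For the second equality, I would substitute the defining formula $D_\phi(X\|Y) = \phi(X)-\phi(Y)-\inner{\nabla\phi(Y)}{X-Y}$ into the three $D_\phi$ terms. The three $\phi$-value contributions cancel in pairs, leaving only inner-product terms. Regrouping those inner products gives
\begin{eqnarray*}
-\inner{\nabla\phi(R)}{P-R} + \inner{\nabla\phi(S)}{(P-S)-(R-S)} & = & \inner{P-R}{\nabla\phi(S)-\nabla\phi(R)}\:\:,
\end{eqnarray*}
which is the claimed right-hand side.

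I do not anticipate a real obstacle here: the only subtlety is to be careful that the relation (\ref{eqbK}) really is a pointwise/entrywise identity whose right-hand correction depends only on the arguments separately (not jointly), which is precisely what makes the telescoping work and, equivalently, is what lets $K_{1/q}$ inherit the Bregman triangle equality even though it is not itself a Bregman divergence for general $q$. One tiny hygiene point worth flagging is that $\phi = -H_q$ is convex and differentiable on the relative interior of $\bigtriangleup_{n^2}$, so $\nabla\phi(R),\nabla\phi(S)$ are well defined throughout the computation whenever $R,S$ have strictly positive entries, which is the regime we are in since all iterates of \sotrot~lie in $\bar{\mathcal{Q}}$.
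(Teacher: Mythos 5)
Your proof is correct and follows exactly the route the paper intends: the paper states this lemma without a separate proof, immediately after the key identity (\ref{eqbK}), and your telescoping of the argument-separable correction term $g(X)=\sum_{i,j}x_{ij}^q$ together with the classical three-point expansion of $D_\phi$ is precisely the intended justification. Both computations check out, and your closing remark correctly identifies why the separability of the correction in (\ref{eqbK}) is the crux that lets $D_q$ inherit the Bregman triangle equality.
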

Hence, point 2. implies as well
\begin{eqnarray}
D_q(P\|R) & = & D_q(P\|P^\star) + D_q(P^\star\|R) \:\:,
\end{eqnarray}
$\forall P\in\mathcal{P},\forall R\in\bar{\mathcal{Q}}$, and so $D_q(P\|\tilde{U}) = D_q(P\|P^\star) + D_q(P^\star\|\tilde{U}), \forall P\in\mathcal{P}$, so that we also have (since $D_q$ is non negative and satisfies $D_q(P\|P) = 0$)
\begin{eqnarray*}
P^\star & = &  \arg\min_{P \in \mathcal{P}} D_q(P \| \tilde{U})\:\:,
\end{eqnarray*}
as claimed (end of the proof of Theorem \ref{thconv}).
\end{myproofsketch}\\
Figure \ref{f-aux} summarizes Theorem \ref{thconv}.
\begin{figure}[t]
\begin{center}
\includegraphics[trim=0bp 0bp 0bp 0bp,clip,width=0.5\columnwidth]{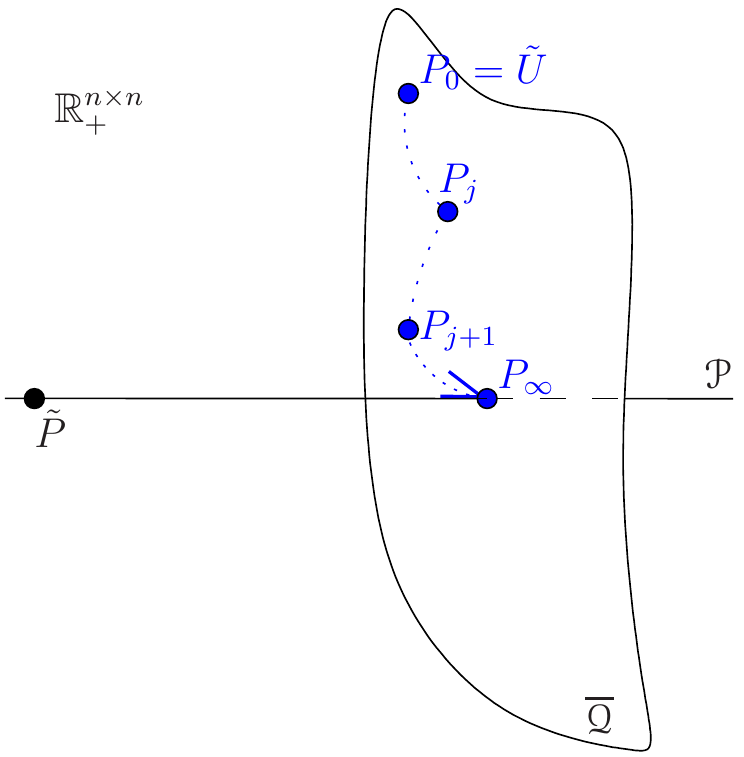}
\end{center}
\caption{High level overview of the proof of Theorem \ref{thconv} (see text for details).}
\label{f-aux}
\end{figure}
We are left with the problem of finding an auxiliary function for the sequence $\mathcal{S}$, which we recall boils down to finding, whenever it exists, some $\ve{y}$ such that $A(P,\ve{y}) > 0$. 
\begin{theorem}\label{thAUX2}
$A(P,\ve{y})$ is an auxiliary function for $\mathcal{S}$ for the sequence of updates $\ve{y}$ given as in steps 6-11 of \sotrot~(Algorithm \ref{alg:SecondOrderSinkhorn}).
\end{theorem}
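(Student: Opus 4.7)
The plan is to argue coordinate by coordinate, using the decomposition $A(P,\ve{y}) = \sum_i A_i(P, y_i)$ in Definition~\ref{defaux}, and show that the $y_i$ produced in Steps 7--13 of Algorithm~\ref{alg:SecondOrderSinkhorn} satisfies $A_i(P, y_i) > 0$ whenever the row residual $d_i = r_i - \sum_j p_{ij}$ is nonzero. Summing over $i$ then yields $A(P, \ve{y}) > 0$, and combined with Lemma~\ref{lemaux} and the base identity $A(P, \ve{0}) = 0$ this is precisely the defining auxiliary-function property for the sequence $\mathcal{S}$, so that Theorem~\ref{thconv} applies and the iterates of \sotrot~converge to $P^\star$.

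First, one identifies the Taylor coefficients of the scalar function $y_i \mapsto A_i(P, y_i)$ at the origin with the quantities assembled in Steps~4--5. The key computations rely on the identities $(d/du)\exp_q^{-q}(u) = -q\exp_q^{-1}(u)$ and $(d/du)\exp_q^{-1}(u) = -\exp_q^{q-2}(u)$, together with the simplification $q\exp_q^{q-1}(-1) = 1$, which follows at once from $\exp_q(-1) = q^{1/(1-q)}$. Using these, the first derivative at $0$ collapses to the residual $d_i$, matching Step~5 exactly. Continuing the differentiation and reducing $\exp_q^{q-2}(a_{ij})$ via the ratio identities implicit in the operations $P\oslash A$ and $P_1\oslash A$ of Step~4 pins down the quadratic and cubic Taylor coefficients of $A_i(P, y_i)$ as $-\tfrac{1}{2}b_i$ and $-\tfrac{1}{6}(2-q)a_i$, respectively. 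This justifies $b_i$ and $a_i$ as the Taylor data driving the quadratic-in-$y_i$ update.

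The main step is upgrading this local Taylor information into a rigorous cubic lower bound
\[
A_i(P, y_i) \;\geq\; y_i d_i \;-\; \tfrac{1}{2} b_i y_i^2 \;-\; \tfrac{1}{6}(2-q)\, a_i y_i^3
\]
valid throughout the clipping window $|y_i| \leq q / ((6-4q) \max_j p_{ij}^{1-q})$ enforced by Step~12. This is where the hard work lies: one has to do careful sign and monotonicity bookkeeping for the higher derivatives of $\exp_q^{-q}(a_{ij} - y_i)$ in the window, using $q\in(0,1)$, to show that the tail of the Taylor series is dominated in absolute value by the cubic term, so that the displayed underestimate is preserved. The particular threshold in Step~12 is calibrated precisely to make this tail estimate go through for every row $i$ simultaneously, and this is the single most delicate ingredient of the argument.

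Given the cubic underestimate, the update rule is algebraically determined. When $d_i > 0$, the choice of $y_i$ in Step~8 as the positive root of $a_i y^2 + b_i y - d_i = 0$ renders the right-hand side of the underestimate strictly positive: one substitutes $a_i y_i^2 = d_i - b_i y_i$ into the cubic and simplifies to a manifestly positive expression in $y_i$ and $d_i$. When $d_i < 0$, the cubic term has sign opposite to the linear one, so dropping it and solving the linearization yields the update $y_i = d_i/b_i$ of Step~10, which again makes the cubic bound positive. In either case Step~12 clips $y_i$ back into the safe window whenever necessary, preserving the underestimate. Summing $A_i(P, y_i) > 0$ over $i$ then delivers $A(P, \ve{y}) > 0$, as required, completing the verification that $A$ is an auxiliary function for $\mathcal{S}$ under the \sotrot~updates.
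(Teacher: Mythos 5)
Your overall architecture matches the paper's: decompose $A$ coordinate-wise, Taylor-expand $A_i$ in $y_i$ around $0$, control the tail so that a low-order polynomial underestimate survives on a window determined by the Step-12 threshold, and check that the algorithm's $y_i$ makes that underestimate strictly positive unless $d_i=0$. But the step you yourself flag as ``the single most delicate ingredient'' --- turning the Taylor data into a valid global underestimate --- is where your write-up is not merely incomplete but stated incorrectly. The inequality you claim, with cubic term $-\tfrac{1}{6}(2-q)a_iy_i^3$ equal to (a miscounted version of) the raw third-order Taylor coefficient (note the factor $2-q$ is already absorbed into $a_i$ in Step 5, so you double-count it), is not what the tail estimate yields and is not obviously true: nothing forces the order-$\geq 4$ remainder to be nonnegative. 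The paper treats the two signs of $y_i$ by genuinely different arguments. For $d_i\leq 0$ (hence $y_i\leq 0$) the tail $S_3=\sum_{k\geq 3}$ is an \emph{alternating} series with decreasing moduli under the window condition, so it has the sign of its first term and one gets the cleaner bound $A_i\geq T^{(2)}_i(y_i)$ with no cubic correction at all; $y_i=d_i/b_i$ is then simply the maximizer of $T^{(2)}_i$. For $d_i\geq 0$ the tail is bounded by a geometric series whose sum is at most twice its first term, producing a cubic correction with coefficient $a_i/3$ --- \emph{twice} the Taylor coefficient --- and it is exactly this doubled coefficient that makes Step 8's positive root of $a_iy^2+b_iy-d_i$ the maximizer of the underestimate.

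Second, your positivity argument for the $d_i>0$ case (``substitute $a_iy_i^2=d_i-b_iy_i$ and simplify to a manifestly positive expression'') does not go through: after the substitution one is left with a difference of a positive and a negative term, not a manifestly positive quantity. The paper's argument is structural: the cubic underestimate vanishes at $y=0$ and its derivative $R_i$ is a downward parabola with $R_i(0)=d_i>0$ and largest root $z_i$, hence positive on $(0,z_i)$, so the underestimate is strictly positive at the chosen $y_i=z_i$. Your reading of Step 12 as a clip that keeps $y_i$ inside the window where the tail estimate applies is consistent with the paper's intent (and arguably cleaner than the paper's own remark about that step), but it inherits the gap above: without the correct sign-dependent tail estimates, the window certifies nothing.
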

\begin{proof}
We shall need the complete Taylor expansion of $A(P,\ve{y})$.
\begin{lemma}
Let us denote for short $\gamma \defeq 1-q$. The Taylor series expansion of $A_i(P,\ve{y})$ (as defined in Definition \ref{defaux}) is:
\begin{eqnarray}
\lefteqn{A_i(P,\ve{y})}\nonumber\\
 & = & y_i(r_i - \sum_jp_{ij}) \nonumber\\
 & & - \sum_j p_{ij}\sum_{k = 2}^{\infty} \left[\frac{1}{k}\prod_{l=1}^{k-1}(\gamma + q/l)\right]y_i^k\left(\frac{p_{ij}^\gamma}{q}\right)^{k-1}\:\:.
\end{eqnarray}
\end{lemma}
\begin{proof}
Let us denote $f(x) = \exp_q^{-q}(x)$. We have: 
\begin{eqnarray}
\frac{\mathrm{d}}{\mathrm{d}x}f(x) &=&  q \exp_q^{1-q}(x)\frac{\mathrm{d}}{\mathrm{d}x}\exp_q^{-1}(x)\nonumber\\
&= & -q \exp_q^{-1}(x)\:\:.
\end{eqnarray}
A simple recursion also shows ($\forall k\geq 2$):
\begin{eqnarray}\label{eq:derivatives}
\lefteqn{\frac{\mathrm{d}^k}{\mathrm{d}x^k}\exp_q^{-1}(x)}\nonumber\\
 & =&  (-1)^k\left[\prod_{i=1}^k(i-(i-1)q)\right] \exp_q^{kq - (k+1)}(x)\:\:,\nonumber
\end{eqnarray}
which yields $\forall k\geq 1$,
\begin{eqnarray}
\frac{\mathrm{d}^k}{\mathrm{d}x^k} f(x)&= & -q \frac{\mathrm{d}^{k-1}}{\mathrm{d} x^{k-1}}\exp_q^{-1}(x) \nonumber\\
&= & (-1)^kq\left[\prod_{i=1}^{k-1}(i\gamma + q)\right] \exp_q^{-(k-1)\gamma -1}(x) \nonumber\:\:.
\end{eqnarray}
Since $\exp_q^q(-1) = \exp_q(-1)/q$ and $\forall i,j, p_{ij} = \exp_q(-1)\exp_q^{-1}(a_{ij})$, writing the Taylor development of $f$ at point $a_{ij}$ evaluated at $y_i$, and adding the $y_ir_i + \sum_jp_{ij}^q$ term, we obtain the desired result.
\end{proof}
We have two special reals to define, $t_i$ and $z_i$. If $r_i \leq \sum_j p_{ij}$, we let $t_i$ denote the maximum of the second order approximation of $A_i(P,\ve{y})$, 
\begin{eqnarray}
T^{(2)}_i(y_i) & \defeq & y_i(r_i - \sum_j p_{ij}) - \frac{y_i^2}{2}\sum_j\frac{p_{ij}^{1+\gamma}}{q}\:\:,
\end{eqnarray}
\textit{i.e.} the root of 
\begin{eqnarray*}
\frac{\mathrm{d}}{\mathrm{d} y}T^{(2)}(y_i) & = & (r_i - \sum_j p_{ij}) - y_i\sum_j\frac{p_{ij}^{1+\gamma}}{q}\:\:.
\end{eqnarray*}
If $\sum_j p_{ij} \leq r_i$, we let $z_i$ be the the largest root of
\begin{eqnarray}
R_i & \defeq & (r_i - \sum_j p_{ij}) \nonumber\\
 & & - y_i\sum_j\frac{p_{ij}^{1+\gamma}}{q} - y_i^2(2-q)\sum_j\frac{p_{ij}^{1+2\gamma}}{q^2}\:\:.\label{defRI}
\end{eqnarray}
We shall see that $z_i$ is positive. Let $y^*_i \defeq t_i$ if $r_i \leq \sum_j p_{ij}$, and $y^*_i \defeq z_i$ otherwise. We first make the assumption that
\begin{eqnarray}
\left| \frac{y^*_i p_{ij}^\gamma}{q}\cdot \left(\gamma + \frac{q}{3}\right)\right| & \leq & \frac{1}{2}\:\:, \forall i,j\:\:.\label{ineq2Y}
\end{eqnarray}
Under this assumption, we have two cases.\\
\noindent ($\star$) Case $r_i \leq \sum_j p_{ij}$. 
By definition, we have in this case that $y_i = t_i \leq 0$ in \sotrot~(Step 10). We also have
\begin{eqnarray}
\lefteqn{A_i(P,\ve{y})}\nonumber\\
 &= & T^{(2)}(y_i) \nonumber\\
 & & - \underbrace{\sum_j p_{ij}\sum_{k = 3}^{\infty} \left[\frac{1}{k}\prod_{l=1}^{k-1}(\gamma + q/l)\right]y_i^k\left(\frac{p_{ij}^\gamma}{q}\right)^{k-1}}_{\defeq S_3}\:\:.
\end{eqnarray}
Since $y_i = t_i \leq 0$, $S_3$ is an alternating series, that is a series whose general term is alternatively positive and negative. Under assumption (\ref{ineq2Y}), the module of its general term is decreasing. A classic result on series allows us to deduce from this fact that (a) $S_3\ll \infty$ and (b) the sign of $S_3$ is that of its first term, \textit{i.e.}, it is negative. Since $A_i(P,\ve{y}) = T^{(2)}(y_i) - S_3$, we have that 
\begin{eqnarray}
A_i(P,\ve{y})&  \geq & T^{(2)}(y_i)  = 0\:\:.
\end{eqnarray}
Note also that $A_i(P,\ve{y}) = 0$ iff $\sum_j p_{ij} = r_i$ as $T^{(2)}(y_i)$ is decreasing on $[t_i,0]$ and $T^{(2)}(0) = 0$. Hence, for the choice in Step 10, $A_i(P,\ve{y})$ is an auxiliary function for variable $i$.\\

\noindent ($\star$) Case $\sum_j p_{ij} \leq r_i$: we still have $A_i(P,\ve{y}) = T^{(2)}(y_i) - S_3$, but this time $y_i$ will be positive, ensuring $y_i(r_i - \sum_j p_{ij})\geq 0$. We first show that $S_3$ is upperbounded by a geometric series under assumption (\ref{ineq2Y}):
\begin{eqnarray*}
\lefteqn{S_3}\nonumber\\
  &= & \sum_jp_{ij}y_i^{3}\left(\frac{p_{ij}^\gamma}{q}\right)^2\sum_{k=0}^\infty \frac{y_i^k}{k+3}\left[\prod_{l=1}^{k+2}(\gamma + q/l)\right]\left(\frac{p_{ij}^\gamma}{q}\right)^k\\
&\leq  & \sum_jp_{ij}(1-q/2)\frac{y_i^{3}}{3}\left(\frac{p_{ij}^\gamma}{q}\right)^2\sum_{k=0}^\infty \left(\frac{y_ip_{ij}^\gamma}{q}(\gamma + q/3)\right)^k\\
&=& \sum_jp_{ij}(1-q/2)\frac{y_i^{3}}{3}\left(\frac{p_{ij}^\gamma}{q}\right)^2 \times \frac{1}{1-\frac{y_ip_{ij}^\gamma}{q}(\gamma + q/3)}\\
&\leq & (2-q)\sum_j p_{ij} \frac{y_i^3}{3}\left(\frac{p_{ij}^\gamma}{q}\right)^2\:\:,
\end{eqnarray*}
which conveniently yields
\begin{eqnarray}\label{eq:lower_bound}
A_i(P,\ve{y}) & \geq & T^{(2)}(y_i) - (2-q)\sum_j p_{ij} \frac{y_i^3}{3}\left(\frac{p_{ij}^\gamma}{q}\right)^2\:\:.
\end{eqnarray}
The derivative of the right-hand term of (\ref{eq:lower_bound}) is $R_i$ defined in eq. (\ref{defRI}) above.
Let us define:
\begin{eqnarray}
a &\defeq & (2-q)\sum_j\frac{p_{ij}^{1+2\gamma}}{q^2}\:\:,\\
b &\defeq & \sum_j\frac{p_{ij}^{1+\gamma}}{q}\:\:,\\
c &\defeq & -(r_i - \sum_j p_{ij})\:\:.
\end{eqnarray}
We have $ac <0$ and consequently the discriminant $\Delta \defeq b^2 - 4ac > b^2$, implying $R_i$ has a positive root $z_i \defeq (-b+\sqrt{\Delta})/(2a)$ which maximises the right-hand term of \ref{eq:lower_bound}, and is such that this right-hand term is positive. Further, we again have that $z_i = 0$ iff $\sum_j p_{ij} = r_i$. It is easy to check that $z_i = y_i$ in Step 8 of \sotrot, for which we check that $A_i(P,\ve{y})\geq 0$, wich equality iff $\sum_j p_{ij} = r_i$.
Hence, for the choice in Step 8, $A_i(P,\ve{y})$ is an auxiliary function for variable $i$.\\

\noindent We can now conclude that under assumption (\ref{ineq2Y}), $A(P,\ve{y})$ is an auxiliary function.\\

\noindent If assumption (\ref{ineq2Y}) does not hold, then notice that this cannot not hold at convergence for coordinate $i$. For this reason, $r_i \neq \sum_j p_{ij}$ and the sign $\mathrm{sign}(r_i - \sum_j p_{ij})$ is also well defined. Therefore, we just need to pick a value for $y_i \neq 0$ which guarantees $A_i(P,\ve{y}) > 0$. To do so, we pick 
\begin{eqnarray}
y_i & = & \frac{q \cdot \mathrm{sign}(r_i - \sum_j p_{ij})}{(6-4q)\cdot \max_j p_{ij}^{1-q}}\:\:,
\end{eqnarray}
remarking that this $y_i$ indeed violates (\ref{ineq2Y}) (recalling $\gamma \defeq 1-q$). We also have $|y_i| \in (0, n^{2(1-q)}/2]$. Notice that this choice guarantees $A_i(P,\ve{y}) > 0$. (end of the proof of Theorem \ref{thAUX2})
\end{proof}
Theorems \ref{thconv} and \ref{thAUX2} altogether prove Theorem \ref{thSortrot}.

\section*{Supplementary Material: experiments}\label{exp_expes}

\section{Per county error distribution, \trot~survey vs Florida average}\label{exp_expFAV}

Figure \ref{fig:error-dist} displays the empirical distribution of the errors for \trot~vs Florida average. While not being a true distribution of the solution error of \trot~--- in a Bayesian sense ---, the graph should convey the intuition that algorithms with a distribution that shrinks around zero provide better inference.

\begin{figure}[!ht]
  \centering
    \includegraphics[width=0.45\textwidth]{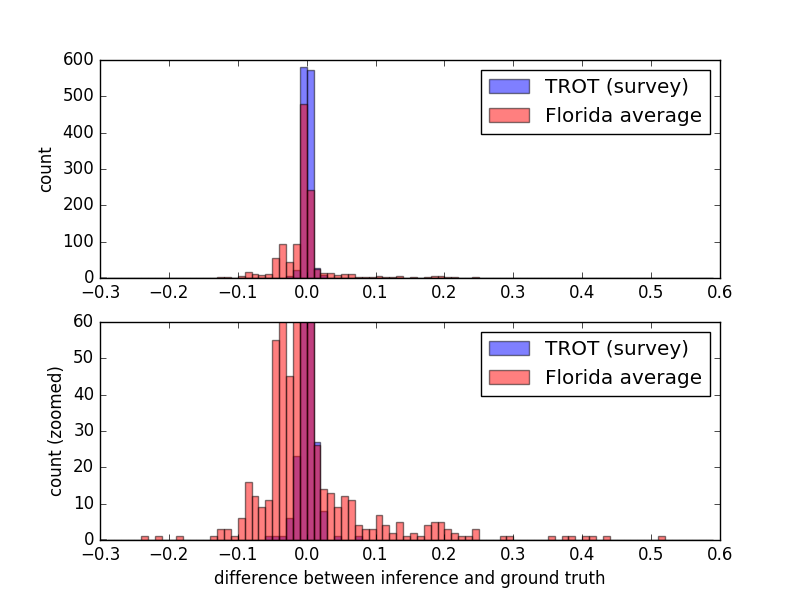}
        \caption{(Signed) error distribution of \trot~compared to Florida-average.}
        \label{fig:error-dist}
\end{figure}

\section{Per county errors, \trot~survey vs \trot~$\ve{1}\ve{1}^\top$}\label{exp_expSURV}

Figure \ref{fig:bars} confronts the prediction errors by county of \trot~when we use $M=\msur$ (survey) and $M = \mnop (= \bm{1}\bm{1}^\top)$ as cost matrix: while the overall performance of the two algorithms is very close, the graph demonstrates that \trot~optimized with $\msur$ achieves very often smaller error, although the average error is worsen by few particularly bad counties. 

\begin{figure*}[!ht]
  \centering
    \includegraphics[width=0.95\textwidth]{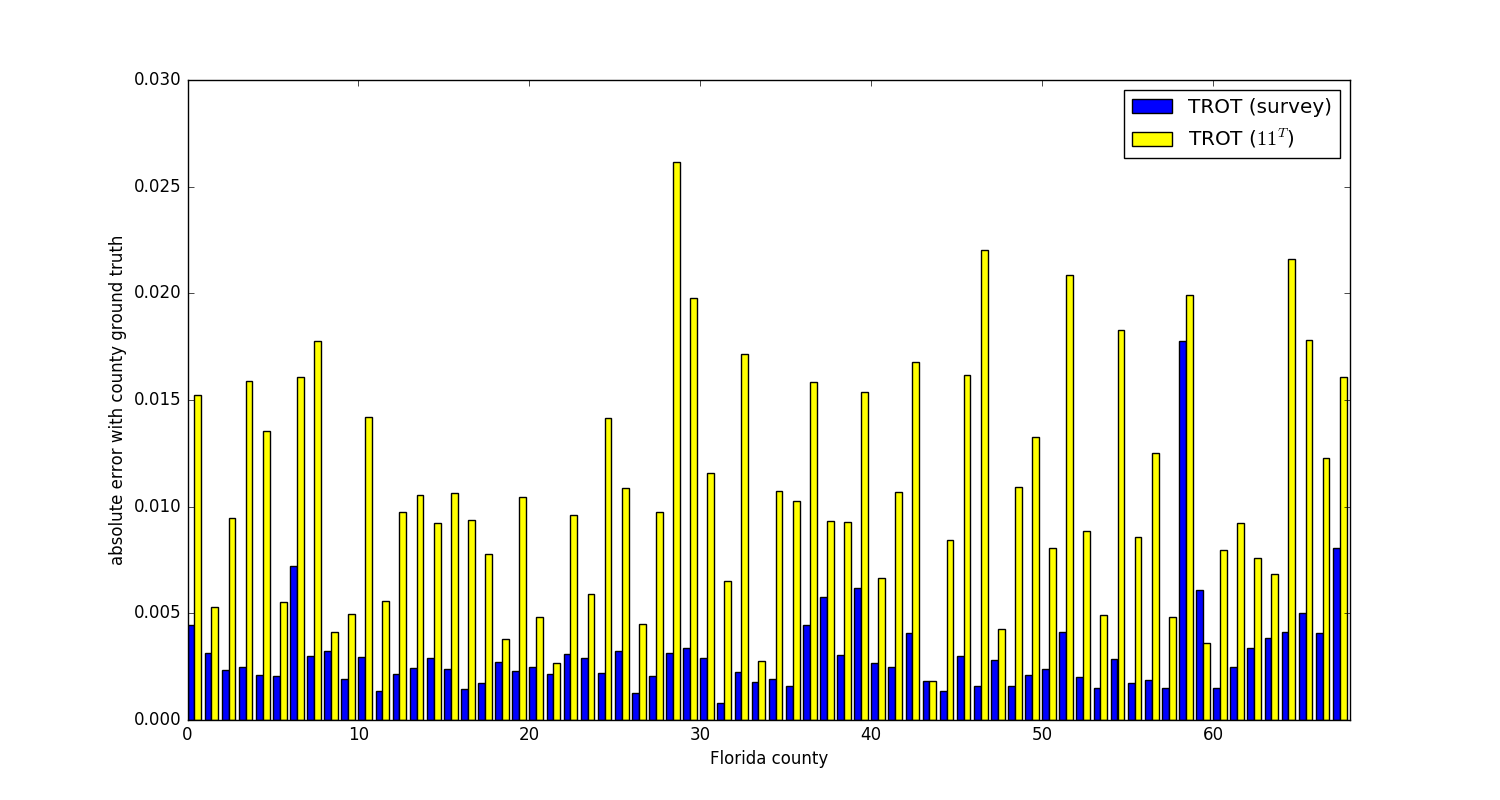}
        \caption{Absolute error of \trot~optimized with $M$ compared to with no prior.}
        \label{fig:bars}
\end{figure*}

\end{document}